\newcommand{\ehdr}[1]{\noindent\emph{#1.}}
\newcommand{\our}{\textsc{Flex\-Sub\-Net}}
\newcommand{\xhdr}[1]{\noindent\textbf{#1.}}
\newlength{\eqnsep}
\g@addto@macro{\normalsize}{%
\setlength{\abovedisplayskip}{\eqnsep}%
\setlength{\abovedisplayshortskip}{\eqnsep}%
\setlength{\belowdisplayskip}{\eqnsep}%
\setlength{\belowdisplayshortskip}{\eqnsep}}
\let\c@table\c@figure
\DeclareMathOperator*{\argmax}{argmax}
\def\d{\mathrm{d}}
\renewcommand{\sec}{\underline}
\renewcommand{\cite}{\citep}
\newcommand{\ttc}{\diffc_{\theta}}
\newcommand{\ga}{\kappa(\alpha)}
\newcommand{\galphat}{g_{\theta}}
\newcommand{\best}{\textbf}
\newcommand{\secbest}{\underline}
\newcommand{\reg}{\rho}
\newcommand{\deepset}{Deep-set}
\newcommand{\settx}{Set-transformer}
\def\ztitle{Neural Estimation of Submodular Functions\\ with Applications to  Differentiable Subset Selection}
\date{}
\title{\ztitle}
\author{Abir De, Soumen Chakrabarti \\
Indian Institute of Technology Bombay\\
\texttt{\{abir,soumen\}@cse.iitb.ac.in}
}
\begin{document}

\maketitle
\begin{abstract}
Submodular functions and variants, through their ability to characterize diversity and coverage, have emerged as a key tool for data selection and summarization.  Many recent approaches to learn submodular functions suffer from limited expressiveness. In this work, we propose \our, a family of flexible neural models for both monotone and non-monotone submodular functions. To fit a latent submodular function from (\tset, \tvalue) observations, \our\ applies a concave function on modular functions in a recursive manner. We do not draw the concave function from a restricted family, but rather learn from data using a highly expressive neural network that implements a differentiable quadrature procedure. Such an expressive neural model for concave functions may be of independent interest.  Next, we extend this setup to provide a novel characterization of monotone $\alpha$-submodular functions, a recently introduced notion of approximate submodular functions.  We then use this characterization to design a novel neural model for such functions. Finally, we consider learning submodular set functions under distant supervision in the form of (\tuniv, \thigh) pairs.  This yields a novel subset selection method based on an order-invariant, yet greedy sampler built around the above neural set functions. Our experiments on synthetic and real data show that \our\ outperforms several baselines.
\end{abstract}

%\vspace{-1mm}
\section{Introduction}
%\vspace{-1mm}
\label{sec:Intro}

Owing to their strong characterization of diversity and coverage, submodular functions and their extensions, \viz, weak and approximate submodular functions, have emerged as a powerful machinery in data selection tasks~\cite{killamsetty2020glister, wei2015submodularity, hashemi2019submodular, ren2018learning, zhang2018generalized, powers2018differentiable,b1,b2,b3}.
% \cite{killamsetty2020glister,wei2015submodularity,hashemi2019submodular,ren2018learning,zhang2018generalized,powers2018differentiable}.
% \cite{el2020optimal,khanna2017scalable,das2011submodular}, are set functions which satisfy the law of diminishing returns. 
%  greedy optimization routines~\cite{nemhauser1978analysis,mirzasoleiman2014lazier,barbosa2016new,Qian2017a,Horel2016},
% \cite{killamsetty2020glister,wei2015submodularity,hashemi2019submodular,ren2018learning,zhang2018generalized,powers2018differentiable}.
%
We propose trainable parameterized families of submodular functions under two supervision regimes. In the first setting, the goal is to estimate the submodular function based on (\tset, \tvalue) pairs, where the function outputs the value of an input \tset.  This problem is hard in the worst case for $\text{poly}(n)$ value oracle queries~\cite{reb}.  This has  applications in auction design where one may like to learn a player's valuation function based on her bids~\cite{balcan2011learning}.  In the second setting, the task is to learn the submodular function under the supervision of (\tuniv, \thigh) pairs, where \thigh\ potentially maximizes the underlying function against all other subsets of \tuniv.  The trained function is expected to extract high-value subsets from freshly-specified perimeter sets.
This scenario has applications in  itemset prediction in recommendation~\cite{tschiatschek2016learning, tschiatschek2018differentiable}, data summarization \cite{kothawade2020deep, badanidiyuru2014streaming, bairi2015summarization, dasgupta2013summarization, Singla2016}, \etc.

\subsection{Our contributions}

Driven by the above motivations, we propose (i)~a novel family of highly expressive neural models
for submodular and \asb\ functions which can be estimated
under the supervisions of both (\tset, \tvalue)   
and (\tuniv, \thigh) pairs; (ii)~a novel permutation adversarial training method for differentiable subset selection, which efficiently trains  submodular and \asb\ functions based on (\tuniv, \thigh) pairs.  We provide more details below.

\xhdr{Neural models for submodular functions} 
% \todo{Or FlexSubNet?}
We design \our, a family of flexible neural  models for monotone, non-monotone submodular functions and monotone \asb\ functions.
% We call it  \our.
%

\ehdr{--- Monotone submodular functions}  We model a monotone submodular function using a recursive neural model which outputs a concave composed submodular function~\cite{csm2,csm3,lin2011class} at every step of recursion. Specifically, it first computes a linear combination of the submodular function computed in the previous step and a modular function and, then applies a concave function to the result to output the next submodular function.

\ehdr{--- Monotone \asb\ function} Our proposed model for submodular function rests on the principle that a concave composed submodular function is always submodular. However, to the best of our knowledge, there is no known result for an $\alpha$-submodular function. We address this gap by showing that  an $\alpha$-submodular function can be represented by applying a mapping $\diffc$ on a positive modular set function, where $\diffc$ satisfies a second-order differential inequality. Subsequently, we provide a neural model representing the universal approximator of $\diffc$, which in turn is used  for modeling an $\alpha$-submodular function.

\ehdr{--- Non-monotone submodular functions}
By applying a non-monotone concave function on modular function,
we extend our model to non-monotone submodular functions.

\iffalse \begingroup \color{Tomato2}
Unlike early works~\cite{tschiatschek2014learning,tschiatschek2018differentiable,lin2012learning}, we do not assume any simple, fixed parametric form in our models. Moreover, in contrast to \citet{bilmes2017deep}, who propose deep submodular networks with a fixed concave function, we do not fix the concave function a-priori in our model, but rather design a trainable model for it. We ensure concavity by enforcing negativeness on the second derivative of the underlying function, then using a differentiable quadrature method to estimate it.  Thus, we provide a universal approximators for univariate concave functions, which may be of independent interest.
\endgroup \fi

Several recent models learn subclasses of submodular functions~\citep{lin2012learning, tschiatschek2014learning, tschiatschek2018differentiable}.  \citet{bilmes2017deep} present a thorough theoretical characterization of the benefits of network depth with fixed concave functions, in a general framework called deep submodular functions (DSF).
DSF leaves open all design choices: the number of layers, their widths, the DAG topology, and the choice of concave functions. 
All-to-all attention has replaced domain-driven topology design in much of NLP \cite{devlin2018bert}.  Set transformers \citep{lee2019set} would therefore be a natural alternative to compare against DSF, but need more memory and computation.
Here we explore the following third, somewhat extreme trade-off:
we restrict the topology to a single recursive chain,
thus providing an effectively plug-and-play model with no topology and minimal 
hyperparameter choices (mainly the length of the chain).
However, we compensate with a more expressive, trainable concave function that is shared across all nodes of the chain.
Our experiments show that our strategy improves ease of training and predictive accuracy beyond both set transformers and various DSF instantiations with fixed concave functions.
% Related work is discussed in more detail in Appendix\,\ref{app:limitation}.

\xhdr{Permutation insensitive differentiable subset selection} 
It is common \cite{tschiatschek2014learning, sakaue2021differentiable,powers2018differentiable} to select a subset from a given dataset by sequentially sampling   elements using a softmax distribution obtained from the outputs of a set function on various sets of elements. At the time of learning
set functions based on (\tuniv, \thigh) pairs,
this protocol naturally results in order-sensitivity in the training process for learning set functions.
To mitigate this, we propose a novel max-min optimization.
Such a formulation sets forth a game between an adversarial permutation generator and the set function learner --- where
the former generates the worst-case permutations to induce minimum
likelihood of training subsets and the latter keeps maximizing the likelihood function 
until the estimated parameters become permutation-agnostic.
To this end, we use a Gumbel-Sinkhorn neural network \cite{mena2018learning,permgnn,demaximum,roy2022interpretable} as a neural surrogate of hard permutations,  that expedites the underlying training process and allows us to avoid combinatorial search on large permutation spaces. 

% We summarize a large dataset via sequential sampling from a softmax distribution induced by the marginal gains of the submodular (or $\alpha$-submodular)
%  set function.
%  At each iteration, our summarization model first computes marginal gains of the set function on each element from the current set of candidates,
%  feeds them as logits into a softmax distribution and then samples the elements. 

\xhdr{Experiments}
We first experiment with several submodular set functions and synthetically generated examples, which show that
\our\ recovers the function more accurately than several baselines.  Later experiments with several real datasets on product recommendation reveal that \our\ can predict the purchased items more effectively and efficiently than several  baselines.
\section{Related work}
%\vspace{-2mm}

\xhdr{Deep set functions} 
Recent years have witnessed a surge of interest in deep learning of set functions.  \citet{zaheer2017deep} showed that any set function can be modeled using a symmteric aggregator on the feature vectors associated with the underlying set members. \citet{lee2019set} proposed a transformer based neural architecture to model set functions. However, their work do not focus on modeling or learning submodular functions in particular. Deep set functions enforce permutation invariance by using symmetric aggregators~\cite{zaheer2017deep, ravanbakhsh2016deep, qi2017pointnet}, which have several applications, \eg, character counting~\cite{lee2019set}, set anomaly detection~\cite{lee2019set}, graph embedding design~\cite{hamilton2017inductive, kipf2016semi,verma2022varscene,samanta2020nevae}, \etc. However, they often suffer from limited expressiveness as shown by \citet{pmlr-v97-wagstaff19a}.  Some work aims to overcome this limitations by sequence encoder followed by learning a permutation invariant network structure~\cite{prateek, permgnn}.  However, none of them learns an underlying submodular model in the context of subset selection.

\xhdr{Learning functions with shape constraints}
Our double-quadrature strategy for concavity is inspired by a series of recent efforts to fit functions with \emph{shape constraints} to suit various learning tasks.
\citet{umnn} proposed universal monotone neural networks (UMNN) was a significant early step in learning univariate monotone functions by numerical integration of a non-negative integrand returned by a `universal' network --- this paved the path for universal monotone function modeling.
\citet{Gupta2020MultiShape} extended to multidimensional shape constraints for supervised learning tasks, for situations were features complemented or dominated others, or a learnt function $y=f(\bm{x})$ should be unimodal.  Such constraints could be expressed as linear inequalities, and therefore possible to optimize using projected stochastic gradient descent.
\citet{Gupta2021PenDerShape} widened the scope further to situations where more general constraints had to be enforced on gradients.
In the context of density estimation and variational inference, a popular technique is to transform between simple and complex distributions via invertible and differentiable mappings using \emph{normalizing flows} \citep{Kobyzev2021NormalizingFlows}, where coupling functions can be implemented as monotone networks.
\todo{@ad this line may be useful in intro}
Our work provides a bridge between shape constraints, universal concavity and differentiable subset selection.

\xhdr{Deep submodular functions (DSF)} 
Early work predominantly modeled a trainable submodular function as a mixture of fixed submodular functions~\cite{lin2012learning,tschiatschek2014learning}.
% Later work learn the mixture weights, keeping the mixture components fixed~\cite{lin2012learning,tschiatschek2014learning}.
If training instances do not fit their `basis' of hand-picked submodular functions, limited expressiveness results.
In the quest for `universal' submodular function networks, \citet{bilmes2017deep} and \citet{bai2018submodular} undertook a thorough theoretical inquiry into the effect of network structure on expressiveness.  Specifically, they modeled submodular functions as an aggregate of concave functions of modular functions, computed in a topological order along a directed acyclic graph (DAG), driven by the fact that a concave function of a monotone submodular function is a submodular function~\cite{csm1, csm2, csm3}.
% Among the key contributions is a proof that a shallow DSF cannot model functions that can be modeled by a deeper one.  
But DSF provides no practical prescription for  picking the concave functions. Each application of DSF will need an extensive search over these design spaces.

\xhdr{Subset selection} 
Subset selection especially under submodular or approximate submodular profit enjoys an efficient greedy maximization routine which admits an approximation guarantee.  Consequently, a wide variety of set function optimization tasks focus on representing the underlying objective as an instance of a submodular function.  At large, subset selection  has a myriad of applications in machine learning, e.g., data summarization~\cite{bairi2015summarization}, feature selection~\cite{khanna2017scalable}, influence maximization in social networks~\cite{kempe2003maximizing,de2018shaping,de2019learning,zarezade2017cheshire}, opinion dynamics~\cite{de2016learning,de2018demarcating,de2019learning,zhang2021learning,koley2021demarcating}, 
efficient learning~\cite{durga2021training,killamsetty2021grad}, human assisted learning~\cite{cuha,ruha,okati2021differentiable}, etc. However, these works do not aim to learn the underlying submodular function from training subsets. 

\xhdr{Set prediction} 
Our work is also related to set prediction.  \citet{zhang2019deep} use a encoder-decoder architecture for set prediction.  \citet{rezatofighi2020learn} provide a deep probabilistic model for set prediction. However, they aim to predict an output  set rather than the set function. 
% Moreover, in contrast to automatic data summarization, the underlying input set
% in these works  does not represent the summary of a larger set.

\xhdr{Differentiable subset selection} 
Existing trainable subset selection methods~\cite{tschiatschek2014learning, kothawade2020deep} often adopt a max-margin optimization approach.
However, it requires solving one submodular optimization problem at each epoch, which renders it computationally expensive. On the other hand, \citet{tschiatschek2018differentiable} provide a probabilistic soft-greedy model which can generate and  be trained on a permutation of subset elements.
But then, the trained model becomes sensitive to this specific permutations.~\citet{tschiatschek2018differentiable} overcome this challenge by presenting
several permutations to the learner, which can be inefficient. 

 \xhdr{One sided smoothness} 
We would like to highlight that our characterization for $\alpha$-submodular function is a special case of one-sided smoothness (OSS) proposed in~\cite{oss1,oss2}. However, the significance of these characterizations are different between their and our work. First, they consider $\gamma$-meta submodular function which is a  different generalisation of submodular functions compared to $\alpha$-submodular functions. Second, the OSS characterization they provide is for the multilinear extension of $\gamma$-meta submodular function, whereas we provide the characterization of $\alpha$-submodular functions itself, which allows direct construction of our neural models. 

 \xhdr{Sample complexity in the context of learning submodular functions}
~\citet{reb} provided an algorithm which outputs a function $\hat{f}(S)$ that approximates an arbitrary monotone submodular function $f(S)$ within a factor $O(\sqrt{n}\log n)$ using poly$(n)$ queries on~$f$.  Their algorithm considers a powerful active probe setting where $f$ can be queried with arbitrary sets. In contrast, \citet{balcan2011learning} consider a more realistic passive setup used in a supervised learning scenario, and designed an algorithm which obtains an approximation of $f(S)$ within factor $O(\sqrt{n})$.

\section{Design of \our}
\label{sec:subm}

In this section, we  first present our notations and then propose a family of flexible neural network models 
for monotone and non-monotone submodular functions and   $\alpha$-submodular functions.
% First we provide nonlinear recursive characterizations of these functions 
% using Proposition~\ref{prop-basic} and then provide their neural parameterization
% to complete the design of our framework, \our.

\subsection{Notation and preliminary results}  

We denote $V=\set{1,2,..,|V|}$ as the ground set or universal set of elements and
$S,T\subseteq V$ as subsets of $V$.  
Each element $s \in V$ may be associated with a feature vector $\feat_{\els}$.
Given a set function  $F:2^V \to \RR$,  
we define the marginal utility $F(\els\given S) :=  F(S\cup \set{\els})-F(S)$. 
The function $F$ is called \emph{monotone}  if $F(\els\given S) \ge 0$ whenever $ S\subset V$ and $\els\in V\cp S$;  $F$ is called \emph{$\alpha$-submodular} with $\alpha >0$ if $F(\els\given S) \ge \alpha F(\els \given T)$ whenever $S \subseteq  T$ and $\els \in V\cp T$~\cite{hashemi2019submodular,zhang2016submodular,el2020optimal}. As a special case, $F$ is submodular if $\alpha=1$ and $F$ is modular if $F(s\given S) = F(s \given T)$. Here, $F(\cdot)$ is called \emph{normalized} if $F(\varnothing)=0$. Similarly, a real function $f:\RR\to\RR$ is \emph{normalized} if $f(0)=0$. 
Unless otherwise stated, we only consider normalized set functions in this paper.
We quote a key result often used for neural modeling for submodular functions~\cite{csm2,csm3,lin2011class}.
%
%
% \todo{maybe hint why this result is so useful}
\begin{proposition}\label{prop-basic}
Given the set function  $F:2^{V} \to \RR^+$ and a real valued  function  $\phi:\RR\to\RR$, 
(i)~the set function $\phi(F(\cdot))$ is monotone submodular if $F$ is monotone submodular and $\phi$ is an increasing concave function; and,
(ii) $\phi(F(\cdot))$ is non-monotone submodular if $F$ is positive modular and $\phi$ is non-monotone. 
\end{proposition}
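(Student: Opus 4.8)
The plan is to base both parts on one elementary fact about concave functions, which I will call the \emph{diminishing-increments lemma}: if $\phi:\RR\to\RR$ is concave, then for every $h\ge 0$ the map $x\mapsto\phi(x+h)-\phi(x)$ is non-increasing, i.e.\ $\phi(x+h)-\phi(x)\ge\phi(y+h)-\phi(y)$ whenever $x\le y$. To prove it I would, for $x<y$ and $h>0$, note that the pairs $\{x+h,\,y\}$ and $\{x,\,y+h\}$ have the same midpoint $(x+y+h)/2$, so there is a single $\lambda\in(0,1)$ with $x+h=\lambda x+(1-\lambda)(y+h)$ and $y=(1-\lambda)x+\lambda(y+h)$; applying the definition of concavity to each of these two convex combinations and adding the resulting inequalities yields $\phi(x+h)+\phi(y)\ge\phi(x)+\phi(y+h)$, which is the claim (the cases $h=0$ or $x=y$ being trivial). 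This convex-combination route avoids any case analysis on whether the intervals $[x,x+h]$ and $[y,y+h]$ overlap.

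For part~(i), fix $S\subseteq T\subseteq V$ and $s\in V\setminus T$ and abbreviate $a=F(S)$, $b=F(S\cup\{s\})$, $c=F(T)$, $d=F(T\cup\{s\})$. Monotonicity of $F$ gives $a\le b$, $c\le d$ and $a\le c$, while submodularity of $F$ gives $b-a\ge d-c=:h\ge 0$, whence $a+h\le b$. Now I would chain two steps: since $\phi$ is increasing and $a+h\le b$, we have $\phi(b)-\phi(a)\ge\phi(a+h)-\phi(a)$; and since $a\le c$, the diminishing-increments lemma gives $\phi(a+h)-\phi(a)\ge\phi(c+h)-\phi(c)=\phi(d)-\phi(c)$. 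Together these read $(\phi\circ F)(s\given S)\ge(\phi\circ F)(s\given T)$, which is submodularity of $\phi\circ F$; monotonicity of $\phi\circ F$ is immediate from $F$ monotone and $\phi$ increasing, and normalization is inherited because $\phi(F(\varnothing))=\phi(0)=0$.

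For part~(ii) I would first read the hypothesis as ``$\phi$ concave but not necessarily monotone'' --- plain non-monotonicity alone cannot suffice, since with $F(S)=|S|$ the composition $\phi\circ F$ is submodular exactly when $k\mapsto\phi(k)$ has non-increasing increments. Given that, when $F$ is positive modular the marginal $F(s\given S)=w_s$ is independent of the conditioning set and $S\subseteq T$ forces $F(S)\le F(T)$ since the weights are positive; the diminishing-increments lemma then gives $(\phi\circ F)(s\given S)=\phi(F(S)+w_s)-\phi(F(S))\ge\phi(F(T)+w_s)-\phi(F(T))=(\phi\circ F)(s\given T)$, so $\phi\circ F$ is submodular, and it is in general non-monotone because on any stretch of the range of $F$ where $\phi$ decreases, adding an element (which strictly increases $F$) strictly decreases $\phi\circ F$. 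The only genuine care in the whole argument is proving the diminishing-increments lemma in the overlap-free form above and making explicit the implicit concavity of $\phi$ in~(ii); no step needs a difficult idea.
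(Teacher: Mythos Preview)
Your argument is correct in both parts, including the clean midpoint-based proof of the diminishing-increments lemma and the reading of part~(ii) as ``$\phi$ concave but not necessarily monotone'' (which is indeed the only sensible interpretation).

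Note, however, that the paper does \emph{not} supply its own proof of Proposition~\ref{prop-basic}: it quotes it as a known result from the cited literature. The closest the paper comes is the proof of Proposition~\ref{prop:alpha1} in Appendix~\ref{app:sec:alpha}, which establishes the $\alpha$-submodular generalization of part~(i). There the argument proceeds via the chord-slope form of concavity---comparing $\frac{\phi(F(S\cup s))-\phi(F(S))}{F(S\cup s)-F(S)}$ with $\frac{\phi(F(T\cup s))-\phi(F(T))}{F(T\cup s)-F(T)}$ directly---rather than through your diminishing-increments lemma. The two routes are essentially equivalent reformulations of concavity; your version has the minor advantage of treating parts~(i) and~(ii) with a single lemma and of avoiding division by the marginal gain (so the degenerate case $F(s\given T)=0$ needs no separate handling), while the chord-slope route extends more transparently to the $\alpha$-submodular setting because the ratio of marginals appears explicitly.
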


\subsection{Monotone submodular functions}  
\label{sec:ms}

\xhdr{Overview} Our model for monotone submodular functions consists of a neural network which  cascades  the underlying  functions in a  recursive  manner for $N$ steps.  Specifically,
to compute the submodular function $F^{(n)}(\cdot)$ at step $n$,
it first linearly combines the submodular function $F^{(n-1)}(\cdot)$ computed in the previous step and a \emph{trainable} positive modular function  $m^{(n)}(\cdot)$  and then, applies a monotone concave activation function $\phi$ on it. 

\xhdr{Recursive model}  We model the submodular function $F_{\theta}(\cdot)$ as follows:
\begin{align}
% %\vspace{-2cm}
F^{(0)}(S) = m^{(0)} _{\theta}(S); \;
F^{(n)}(S) = \phi_{\theta}\big(\lambda  F^{(n-1)}(S)+  (1-\lambda)   m^{(n)} _{\theta}(S)\big); \;
F_{\theta}(S) = F^{(N)}(S);\label{eq:sub-model}
\end{align}
where the iterations are indexed by $1\le n\le N$, $\{m^{(n)}_{\theta}(\cdot)\}$ is   a sequence of positive modular functions, driven by a neural network with parameter $\theta$. $\lambda\in [0,1]$ is a tunable or trained parameter. We apply a linear layer with positive weights on the each feature vector $\feat_{\els}$ to compute the value of $m_{\theta} ^{(n)}(\cdot)$ and then compute $ m^{(n)} _{\theta}(S) = \sum_{\els \in S} m^{(n)} _{\theta}(\els)$.
% The underlying positive feature weights ensure that $m_{\theta} ^{(n)}(S) \ge 0$.  
Moreover, $\phi_{\theta}$ is an increasing concave function  which, as we shall see later, is modeled using neural networks. Under these conditions, one can  use Proposition~\ref{prop-basic}(i) to easily show that $F_{\theta}(S)$ is a monotone submodular function (Appendix~\ref{app:sec:alpha}).
% contains more discussions.

\subsection{Monotone-$\alpha$-submodular functions}
\label{sec:alpha}

Our characterization for submodular functions in Eq.~\eqref{eq:sub-model} are based on  Proposition~\ref{prop-basic}(i), which implies that a concave composed submodular function is submodular.
 However, to the best of our knowledge, a similar characterization of $\alpha$-submodular functions
is lacking in the literature. 
To address this gap, we first introduce a novel characterization of monotone $\alpha$-submodular functions and then use it to design a recursive model for such functions. 
 
\xhdr{Novel characterization of $\alpha$-submodular function} 
In the following, we show how we can characterize an \asb\ function
using a differential inequality (proven in Appendix~\ref{app:sec:alpha}). 
\begin{theorem}\label{thm:diffc}
Given the function $\diffc:\RR\to\RR^+$ and a modular function $m:V\to[0,1]$, the set function $F(S) = \diffc(\sum_{s\in S}m(s))$  is monotone $\alpha$-submodular for  $|S|\le k$, if $\diffc(x)$ is increasing in $x$ and
%\begin{align} \textstyle
$\frac{\d ^2 \diffc(x)}{\d x^2} \le \frac{1}{k}\log\left(\frac{1}{\alpha}\right)\frac{\d \diffc(x)}{\d x}$.
% \label{eq:diffc}
%\end{align}
\end{theorem}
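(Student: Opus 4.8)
The plan is to collapse the set-function inequality into a one-dimensional statement about $\diffc$ and then exploit the second-order inequality through an integrating-factor argument. Write $x_S := \sum_{s\in S} m(s)$, so that $F(S)=\diffc(x_S)$ and the marginal utility takes the form $F(s\given S)=\diffc(x_S+m(s))-\diffc(x_S)$. Monotonicity of $F$ is then immediate: $\diffc$ increasing gives $F(s\given S)\ge 0$. For $\alpha$-submodularity, fix $S\subseteq T$ and $s\in V\setminus T$ with $|T|\le k$ (this is the sense in which I read the cardinality restriction; since $S\subseteq T$, this also bounds $|S|$). Put $\delta:=m(s)\in[0,1]$ and $\Delta:=x_T-x_S=\sum_{t\in T\setminus S}m(t)$, so that $\Delta\ge 0$ because $S\subseteq T$, and $\Delta\le |T\setminus S|\le k$ because every $m(t)\in[0,1]$; this last bound is the only place the hypothesis $|T|\le k$ enters, and it is the whole reason $k$ appears in the differential inequality. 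What remains to show is $\diffc(x_S+\delta)-\diffc(x_S)\ge \alpha\big(\diffc(x_T+\delta)-\diffc(x_T)\big)$.

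Next I would set $c:=\tfrac1k\log(1/\alpha)$, which is nonnegative in the regime of interest $\alpha\in(0,1]$ (for $\alpha=1$ the inequality reduces to concavity of $\diffc$, recovering the submodular case). The key observation is that the hypothesis $\diffc''(x)\le c\,\diffc'(x)$ is exactly the statement that $\frac{\d}{\d x}\big(e^{-cx}\diffc'(x)\big)=e^{-cx}\big(\diffc''(x)-c\,\diffc'(x)\big)\le 0$, i.e., $e^{-cx}\diffc'(x)$ is non-increasing. Consequently, for any $u$ and any $\Delta\ge 0$, $e^{-cu}\diffc'(u)\ge e^{-c(u+\Delta)}\diffc'(u+\Delta)$, which rearranges to $\diffc'(u)\ge e^{-c\Delta}\,\diffc'(u+\Delta)$. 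Since $0\le\Delta\le k$ and $\alpha\le 1$, we have $e^{-c\Delta}=\alpha^{\Delta/k}\ge \alpha^{1}=\alpha$, so in fact $\diffc'(u)\ge \alpha\,\diffc'(u+\Delta)$ pointwise on the relevant range.

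Finally I would write each marginal as an integral over a common window: $F(s\given S)=\int_0^{\delta}\diffc'(x_S+r)\,\d r$ and $F(s\given T)=\int_0^{\delta}\diffc'(x_T+r)\,\d r=\int_0^{\delta}\diffc'(x_S+\Delta+r)\,\d r$. Applying the pointwise bound of the previous paragraph with $u=x_S+r$ and integrating over $r\in[0,\delta]$ yields $F(s\given S)\ge \alpha\,F(s\given T)$; the nonnegativity of $F(s\given T)$ (again from monotonicity) keeps the direction of the inequality correct even where the marginal vanishes. The proof is essentially this short; the only real care needed is bookkeeping — performing the interval shift in the integral representation so the two marginals are compared at aligned points, and verifying $\Delta=x_T-x_S\le k$ from $S\subseteq T$ together with $m(t)\in[0,1]$ and $|T|\le k$. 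The analytic core, that $\diffc''\le c\,\diffc'$ is equivalent to monotonicity of $e^{-cx}\diffc'(x)$, is the one step worth stating explicitly, and once it is in place the rest is routine.
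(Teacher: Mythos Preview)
Your proof is correct and follows essentially the same approach as the paper: both hinge on the integrating-factor observation that $\diffc''\le c\,\diffc'$ is equivalent to $e^{-cx}\diffc'(x)$ being non-increasing, and both then bound $m(T)-m(S)\le k$ to convert the exponential factor into $\alpha$. The only cosmetic difference is in the final comparison of marginals---the paper applies Cauchy's mean value theorem to the ratio $\big(\diffc_S(m(s))-\diffc_S(0)\big)/\big(\diffc_T(m(s))-\diffc_T(0)\big)$, whereas you integrate the pointwise derivative inequality over $[0,\delta]$; your version is arguably cleaner since it avoids the nondegeneracy caveats of Cauchy's MVT.
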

The above theorem also implies that 
given $\alpha=1$, then $F(S) =  \diffc(\sum_{s\in S}m(s))$ is monotone submodular if $\diffc(x)$ is concave in $x$, which reduces to a particular case of Proposition~\ref{prop-basic}(i). 
Once we design $F(S)$ by applying $\diffc$ on a modular function, our next goal is to design more expressive and flexible modeling in a recursive manner similar to Eq.~\eqref{eq:sub-model}. To this end, we extend Proposition~\ref{prop-basic} to the case for \asb\ functions.

% formally state the our first characterization of monotone $\alpha$-submodular function (proven in Appendix~\ref{app:sec:alpha}).
\begin{proposition}
\label{prop:alpha1}
Given a monotone $\alpha$-submodular function $F(\cdot)$, $\phi(F(S))$ is monotone $\alpha$-submodular, if $\phi(\cdot)$  is an increasing concave function.  Here, $\alpha$ remains the same for $F$ and $\phi(F(\cdot))$.
\end{proposition}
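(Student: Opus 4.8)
The plan is to verify the two defining properties of a monotone $\alpha$-submodular function for $g:=\phi\circ F$ directly, from the corresponding properties of $F$ together with the shape of $\phi$. Fix $S\subseteq T\subseteq V$ and $s\in V\setminus T$, and write $a=F(S)$, $b=F(S\cup\{s\})$, $c=F(T)$, $d=F(T\cup\{s\})$. Monotonicity of $g$ is immediate: $b\ge a$ since $F$ is monotone, hence $\phi(b)\ge\phi(a)$ since $\phi$ is increasing. The substantive claim is $\phi(b)-\phi(a)\ge\alpha\,\big(\phi(d)-\phi(c)\big)$ for all such $S,T,s$.

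First I would record the order relations forced by monotone $\alpha$-submodularity of $F$: $a\le c$ (from $S\subseteq T$), $b\le d$ (from $S\cup\{s\}\subseteq T\cup\{s\}$), $a\le b$ and $c\le d$ (monotonicity), and $b-a=F(s\mid S)\ge\alpha\,F(s\mid T)=\alpha\,(d-c)\ge0$ (the $\alpha$-submodular inequality together with $F(s\mid T)\ge0$). The degenerate cases are disposed of at once: if $a=b$ then $0=F(s\mid S)\ge\alpha F(s\mid T)\ge0$ forces $c=d$, so both sides of the target inequality vanish; if instead $c=d$ the right-hand side is $0$ while the left is $\ge0$. So assume $a<b$ and $c<d$.

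The crux is a standard property of concave functions (monotonicity of divided differences): if $\phi$ is concave and $a\le c$, $b\le d$ with $a<b$, $c<d$, then
\[
\frac{\phi(b)-\phi(a)}{b-a}\;\ge\;\frac{\phi(d)-\phi(c)}{d-c}.
\]
I would prove this by inserting, in the case $c<b$ where the intervals $[a,b]$ and $[c,d]$ overlap, the intermediate chord over $[c,b]$ and using twice the elementary fact that for a concave function the slope of a chord does not increase when either of its endpoints is moved to the right with the other held fixed; when $b\le c$ the two intervals are disjoint and the inequality is just ``chord slopes decrease from left to right''. This lemma is the one place the argument must be handled with care: one cannot instead bound $\phi(b)-\phi(a)$ below by a tangent estimate at $b$ and $\phi(d)-\phi(c)$ above by a tangent at $c$, because $b=F(S\cup\{s\})$ may exceed $c=F(T)$, so the two intervals can genuinely overlap and a naive comparison of derivatives fails; the average-slope (chord) comparison is what survives.

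With $\sigma:=\dfrac{\phi(b)-\phi(a)}{b-a}\ge0$ (nonnegative because $\phi$ is increasing), the assembly is a single line:
\[
\phi(b)-\phi(a)=\sigma\,(b-a)\;\ge\;\sigma\,\alpha\,(d-c)\;\ge\;\frac{\phi(d)-\phi(c)}{d-c}\,\alpha\,(d-c)\;=\;\alpha\,\big(\phi(d)-\phi(c)\big),
\]
where the first inequality uses $b-a\ge\alpha(d-c)\ge0$ and the second uses the lemma together with $\alpha(d-c)\ge0$. Since this holds for every admissible $S,T,s$, $g=\phi\circ F$ is monotone $\alpha$-submodular with the same $\alpha$. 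I would also note that the argument never used $\alpha\le1$; alternatively, for $\alpha\in(0,1]$ one can bypass the overlap case in the lemma via the chain $\phi(b)-\phi(a)\ge\phi\!\big(a+\alpha(d-c)\big)-\phi(a)\ge\alpha\big(\phi(a+(d-c))-\phi(a)\big)\ge\alpha\big(\phi(d)-\phi(c)\big)$, using respectively $a+\alpha(d-c)\le b$ with monotonicity of $\phi$, concavity of $t\mapsto\phi(a+t)-\phi(a)$ vanishing at $0$, and the diminishing-differences inequality $\phi(a+\delta)-\phi(a)\ge\phi(c+\delta)-\phi(c)$ for $a\le c$ with $\delta=d-c$.
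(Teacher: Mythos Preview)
Your proof is correct and follows essentially the same route as the paper: both arguments rest on the chord-slope inequality for concave $\phi$, namely $\dfrac{\phi(b)-\phi(a)}{b-a}\ge\dfrac{\phi(d)-\phi(c)}{d-c}$ with $a=F(S)$, $b=F(S\cup\{s\})$, $c=F(T)$, $d=F(T\cup\{s\})$, and then multiply through using $b-a\ge\alpha(d-c)$ together with nonnegativity of the relevant factors. Your write-up is in fact more careful than the paper's, which silently divides by $F(s\mid S)$ and $F(s\mid T)$ without disposing of the degenerate cases you handle explicitly, and which asserts the chord-slope comparison without noting that the intervals $[a,b]$ and $[c,d]$ may overlap; your alternative chain at the end (for $\alpha\le1$) is a nice bonus not present in the paper.
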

Note that, when $F(S)$ is submodular, \ie, $\alpha=1$, the above result reduces to Proposition~\ref{prop-basic} in the context of concave composed modular functions.

\xhdr{Recursive model} Similar to Eq.~\eqref{eq:sub-model} for submodular functions, our model for \asb\ functions is also driven by an recursive model which maintains an \asb\ function and updates its value recursively for $N$ iterations.
However, in contrast to \our, where the underlying submodular function is initialized with a positive modular function,
we initialize the corresponding \asb\ function $F^{(0)}(S)$ with $\diffc_{\theta}(m_{\theta} ^{(0)}(S))$, where $\diffc_{\theta}$ is a trainable function satisfying
the conditions of Theorem~\ref{thm:diffc}. Then we recursively apply \todo{reviewers protested this is repetitive} a trainable monotone concave function on $F^{(n-1)}(S)$ for $n\in[N]$ to build a flexible model for \asb\ function. Formally, we have: 
% Specifically, given the ground set $V$ and a sequence of trainable positive modular functions $m_{\theta} ^{(n)} (\cdot) $,
% we model the $\alpha$-submodular function as follows:
\begin{align}
F^{(0)}(S) =    \diffc_{\theta}(m^{(0)} _{\theta}(S)); \;
F^{(n)}(S) = \phi_{\theta} \hspace{-0.3mm}\left(\hspace{-0.3mm} \lambda  F^{(n-1)}(S)+  (1-\lambda)    m^{(n)} _{\theta}(S)\right); \;
F_{\theta}(S) =  F^{(N)}(S);
\label{eq:alpha-sub-model}
%  \hspace{-1mm}& F^{(0)}(S) =    \diffc_{\theta}(m^{(0)} _{\theta}(S)),\label{eq:init-F-alpha}   \\
%   \hspace{-1mm}& F^{(n)}(S) = \phi_{\theta} \hspace{-0.3mm}\left(\hspace{-0.3mm} \lambda  F^{(n-1)}(S)+  (1-\lambda)    m^{(n)} _{\theta}(S)\right)\label{eq:F-alpha-rec} \\[-0.5ex]
%  \hspace{-1mm}& F_{\theta}(S) =  F^{(N)}(S) \label{eq:alpha-sub-model},
\end{align}
with $\lambda\in[0,1]$.  Here, $m_{\theta}$, $\diffc_{\theta}$ and $\phi_{\theta}$ are realized using neural networks parameterized by $\theta$.
% 
% Then, using Proposition~\ref{prop:alpha1} and Theorem~\ref{thm:diffc}, we can show that $F_{\theta}(S)$ is $\alpha$-submodular in $S$, as stated in the following (Proven in Appendix~\ref{app:sec:alpha}).
Then, using Proposition~\ref{prop:alpha1} and Theorem~\ref{thm:diffc}, we can show that $F_{\theta}(S)$ is $\alpha$-submodular in $S$ (proven in Proposition~\ref{prop:neural-alpha} in Appendix~\ref{app:sec:alpha}).

\subsection{Non-monotone submodular functions} 
\label{sec:nms}

In contrast to monotone set functions, non monotone submodular functions
can be built by applying concave function on top of \emph{only one modular function} rather than submodular function (Proposition~\ref{prop-basic} (i) vs.\ (ii)).
%
% The model given by \eqref{eq:init-F}--\eqref{eq:sub-model} above is very specific to monotone submodular functions. In general, such a model cannot be extended to a general form of submodular functions which also include   non-monotone functions, due to the presence of  the strong conditions in Proposition~\ref{prop-basic}(i). 
% However, as suggested in Proposition~\ref{prop-basic}(ii), one can provide a simple neural model for  general submodular functions, by applying any  non-monotone concave function on a modular function. 
To this end, we model a non-monotone submodular function $F_{\theta}(\cdot)$ as follows:
\begin{align}
\F(S)= \psi_{\theta}(m_{\theta}(S)) \label{eq:non-mon}
\end{align}
where $m_{\theta}(\cdot)$ is positive modular function but $\psi_{\theta}(\cdot)$ can be a \emph{non-monotone} concave function. Both $m_{\theta}$ and $\psi_{\theta}$ are trainable functions realized using neural networks parameterized by $\theta$. One can use Proposition~\ref{prop-basic}(ii) to show that $\F(S)$ is a non-monotone submodular function.
% Moreover, if $\psi_{\theta}(\cdot)$ is non-monotone (monotone) then $\F (S)$ is also non-monotone (monotone).    
 
\subsection{Neural parameterization of $m_\theta, \phi_\theta, \psi_\theta, \varphi_\theta$} 

We complete the neural parameterization introduced in Sections~\ref{sec:ms}--\ref{sec:nms}.  Each model has two types of component functions: (i)~the modular set function $m_{\theta}$; and, (ii)~the concave functions $\phi_{\theta}$ (Eq.~\eqref{eq:sub-model} and~\eqref{eq:alpha-sub-model}) and $\psi_{\theta}$ (Eq.~\eqref{eq:non-mon})  and the non-concave function $\diffc_{\theta}$ (Eq.~\eqref{eq:alpha-sub-model}).  While modeling $m_{\theta}$ is simple and straightforward, designing neural models for the other components, \ie, $\phi_{\theta}$, $\psi_{\theta}$ and $\diffc_{\theta}$  is non-trivial.
As mentioned before, because we cannot rely on the structural complexity of our `network' (which is a simple linear recursion) or a judiciously pre-selected library of concave functions \citep{bilmes2017deep}, we need to invest more capacity in the concave function, effectively making it universal.

\xhdr{Monotone submodular function}
Our model for monotone submodular function described in Eq.~\eqref{eq:sub-model} consists of two neural components: the sequence of modular functions $\setx{m^{(n)} _{\theta}}$ and $\phi_{\theta}$.

\ehdr{--- Parameterization of $m_{\theta}$} 
We model the modular function $m_{\theta}^{(n)}: 2^V \to \RR^+$ in Eq.~\eqref{eq:sub-model} as $m_\theta^{(n)}(S)=\sum_{s\in S} \theta \cdot \feat_\els$, where  $\feat_\els$ is the feature vector  for each element $\els\in S$ and both $\theta, \feat_\els$ are non-negative. \todo{@AD $\theta$ shared across all $n$?}

\ehdr{--- Parameterization of $\phi_{\theta}$} Recall that $\phi_{\theta} $ is an increasing concave function. We model it using the fact that a differentiable  function is concave if its second derivative is negative.  We focus on capturing the second derivative of the underlying function using a complex neural network of arbitrary capacity, providing a negative output. Hence, we use a positive neural network $\gt$ to model  the second derivative
\begin{align}
    \frac{\d^2\phi_{\theta}(x)}{\d x^2} = -\gt(x)\le 0.
\end{align}
Now, since $\phi_{\theta}$ is increasing, we have:
\begin{align}
    \frac{\d \phi_{\theta}(x)}{\d x} &=
    \int_{b=x}^ {b=\infty} \gt(b) \, \d b    \ge 0 
    % \begin{align}
\implies \phi_{\theta}(x) = \int_{a=0} ^{a=x} \int_{b=a} ^{b=\infty} \gt(b) \, \d b \, \d a.   \label{eq:phim}
\end{align}
% which finally leads to the following model for $ \phi_{\theta}(x) $
% \begin{align}
%  \phi_{\theta}(x) = \int_{a=0} ^{a=x} \int_{b=a} ^{b=\infty} \gt(b) \, \d b \, \d a.   \label{eq:phim}
% \end{align}
Here, $\phi_{\theta}(\cdot)$ is normalized, \ie, $\phi_{\theta}(0)=0$, which ensures that $\{F_{\theta}^{(n)}\}$ in Eq.~\eqref{eq:sub-model} are also  normalized. An offset in~Eq.~\eqref{eq:phim}  allows a nonzero initial value of $\phi_{\theta}(\cdot)$, if  required.
Note that  monotonicity and concavity of $\phi_{\theta}$ can be achieved by the restricting positivity of $\gt$.
Such a constraint  can be ensured by setting $\gt(\cdot)=\relu(\Lambda^{(h)} _{\theta}(\cdot))$, where $\Lambda^{(h)} _{\theta}(\cdot)$ is any complex neural network.  Hence, $\phi_{\theta}(\cdot)$ represents a class of universal approximators of normalized increasing concave functions, if $\Lambda^{(h)}  _{\theta}(\cdot)$ is an universal approximator of continuous functions~\cite{hornik1989multilayer}. For a monotone submodular function of the form of concave composed modular function, we have the following result (Proven in Appendix~\ref{app:sec:alpha}).
\begin{proposition}
\label{prop:ua}
Given an universal set $V$, a constant $\epsilon>0$ and a submodular function $F(S) = \phi(\sum_{s\in S} m(\zb_s))$  where $\zb_s \in \RR^d$, $S\subset V$, $0 \le m(\zb) < \infty$ for all $\zb \in \RR^d$. Then there exists two fully connected neural networks   $m_{\theta_1}$  and $h_{\theta_2} $   of width $d+4$ and $5$ respectively, each with ReLU activation function, such that the following conditions hold:
\begin{align}
   \left\|F(S) -   \int_{a=0} ^{a=\sum_{s\in S}m_{\theta_1}(\zb_s)} \int_{b=a} ^{b=\infty} h_{\theta_2}(b) \, \d b \, \d a. \right\|    \le \epsilon \quad \forall \ S\subset V\label{eq:ua}
\end{align}
\end{proposition}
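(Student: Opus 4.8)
The plan is to observe that the right-hand integral in~\eqref{eq:ua} is exactly $\Phi\big(\sum_{s\in S}m_{\theta_1}(\zb_s)\big)$, where $\Phi(x):=\int_{0}^{x}\int_{a}^{\infty}h_{\theta_2}(b)\,\d b\,\d a$ is precisely the construction of~\eqref{eq:phim} with $h_{\theta_2}$ playing the role of the positive integrand, and then to bound two independent sources of error: (i)~how accurately the width-$(d+4)$ network $m_{\theta_1}$ reproduces the modular weights $m(\zb_s)$, and (ii)~how accurately $\Phi$ reproduces the latent concave $\phi$. Two reductions make this tractable. Since $V$ is finite and $m\ge 0$, every argument that is ever passed to $\phi$ lies in the compact interval $[0,M]$ with $M:=\sum_{s\in V}m(\zb_s)<\infty$, so I only need to approximate $\phi$ on $[0,M]$ rather than on all of $\RR^+$; and, again because $V$ is finite, I only need $m_{\theta_1}$ to be accurate at the finitely many feature vectors $\{\zb_s\}_{s\in V}$.

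For the outer function I would use that $\phi$, being concave on $\RR$, is locally Lipschitz, hence absolutely continuous on $[0,M]$, so $\phi'$ exists a.e., is bounded, non-negative (monotonicity) and non-increasing (concavity), and $\phi(x)=\int_0^x\phi'(t)\,\d t$ by normalization. I would replace $\phi'$ on $[0,M]$ by a non-negative, non-increasing, piecewise-linear function agreeing with it in $L^1([0,M])$ up to $\eta_1$, extend it to $[0,\infty)$ by a short linear taper down to $0$ just past $M$, and call the result $g$; then $g$ has compact support and $\sup_{[0,M]}\big|\int_0^{\cdot}g-\phi\big|\le\eta_1$. Its negative derivative $h:=-g'$ is a non-negative step function of compact support with $\int_x^{\infty}h=g(x)$. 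Invoking the width-bounded $L^1$ universal approximation theorem for ReLU networks in one input dimension (width $1+4=5$), I would obtain a width-$5$ ReLU network $h_{\theta_2}$ with $\|h_{\theta_2}-h\|_{L^1}\le\eta_2$; a terminal ReLU together with a fixed piecewise-linear gate (costing $O(1)$ extra width) keeps $h_{\theta_2}$ non-negative and compactly supported, so $L:=\int_0^{\infty}h_{\theta_2}<\infty$. Then $\Phi$ is increasing, normalized, $L$-Lipschitz, and $\big|\Phi'(x)-g(x)\big|=\big|\int_x^{\infty}(h_{\theta_2}-h)\big|\le\eta_2$, whence $\sup_{[0,M]}|\Phi-\phi|\le M(\eta_1+\eta_2)$.

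For the inner modular part, since $\{\zb_s:s\in V\}$ is finite I would pick a non-negative continuous (e.g.\ piecewise-linear, compactly supported) function $\bar m:\RR^d\to\RR^+$ interpolating $m$ at these points, and apply the width-bounded \emph{uniform} universal approximation theorem for ReLU networks (width $d+4$) to get $m_{\theta_1}$ with $\sup_K|m_{\theta_1}-\bar m|\le\delta$ on a compact $K\supseteq\{\zb_s\}$; then $|m_{\theta_1}(\zb_s)-m(\zb_s)|\le\delta$ for all $s\in V$, so $\big|\sum_{s\in S}m_{\theta_1}(\zb_s)-\sum_{s\in S}m(\zb_s)\big|\le|V|\delta$ for every $S\subseteq V$.

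Finally I would assemble: writing $x_S:=\sum_{s\in S}m(\zb_s)\in[0,M]$ and $\hat x_S:=\sum_{s\in S}m_{\theta_1}(\zb_s)$, the triangle inequality and the Lipschitz bound on $\Phi$ give, for every $S\subseteq V$, $\big|F(S)-\Phi(\hat x_S)\big|\le|\phi(x_S)-\Phi(x_S)|+|\Phi(x_S)-\Phi(\hat x_S)|\le M(\eta_1+\eta_2)+L|V|\delta$. Since $L$ is bounded by a finite constant depending only on $\phi$ (essentially its right derivative at $0$), one first fixes $\eta_1,\eta_2$ to make the first term $\le\epsilon/2$ and then $\delta$ to make the second $\le\epsilon/2$, with no circularity, which yields~\eqref{eq:ua}. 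I expect the main obstacle to be step~(ii): writing an \emph{arbitrary} normalized increasing concave $\phi$ on $[0,M]$ in the rigid double-primitive form $\int_0^x\int_a^{\infty}h_{\theta_2}$ with $h_{\theta_2}$ a width-$5$ ReLU network that is simultaneously non-negative \emph{and} Lebesgue-integrable on $[0,\infty)$ --- the integrability requirement (forcing a piecewise-linear network to vanish outside a compact set while still $L^1$-approximating a prescribed density), together with the absolute-continuity/a.e.-differentiability technicalities when $\phi$ is non-smooth, and a further minor truncation near $0$ should $\phi$ have a vertical tangent there, is where the genuine care is needed.
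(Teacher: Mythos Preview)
Your approach is essentially the paper's: split the error into the inner modular part and the outer concave part, represent $\phi$ through the double primitive of $h=-\phi''$, and invoke the Lu--Pu--Wang--Hu--Wang width-$(d{+}4)$ universal approximation theorem on each piece, then reassemble with a triangle inequality. You are in fact more careful than the paper on two points---handling a merely concave (not $C^2$) $\phi$ via absolute continuity of $\phi'$, and obtaining pointwise control of $m_{\theta_1}$ at the finite set $\{\zb_s\}_{s\in V}$ by interpolation rather than the paper's informal ``$L^1$-close $\Rightarrow$ pointwise-close'' step---while your one unnecessary worry, enforcing $h_{\theta_2}\ge 0$ (which the proposition does not require), can simply be dropped: integrability on $[0,\infty)$ is automatic, since any piecewise-linear ReLU network that is $L^1$-close to a compactly supported target must itself vanish outside a compact set.
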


\xhdr{Monotone \asb\ model} An \asb\ model described in Eq.~\eqref{eq:alpha-sub-model} has three trainable components: (i)~the sequence of modular functions $\{m_{\theta} ^{(n)}(\cdot)\}$, (ii)~the concave function $\phi_{\theta}(\cdot)$ and (iii)~$\diffc_{\theta}(\cdot)$. For the first two components, we reuse the parameterizations used for monotone submodular functions.
In the following, we describe our proposed neural parameterization of $\diffc_{\theta}$.
% For the modular function $m_{\theta}$ and the concave function $\phi_{\theta}$, we use the same parameterization as in 

\ehdr{--- Parameterization of $\diffc_{\theta}(\cdot)$}
From Theorem~\ref{thm:diffc}, we note that $\ttc(\cdot)$ is increasing and satisfies 
% \begin{align}
  $\frac{\d ^2 \ttc(x)}{\d x^2} \le \ga\frac{\d  \ttc(x)}{\d x}$
%   \label{eq:diffcx}  
% \end{align}
where, $\ga = \frac{1}{k}\log\left( {1}/{\alpha}\right)$. 
% Eq.~\eqref{eq:diffcx}
It implies that
\begin{align}
&  e^{-x \ga} \frac{\d ^2 \ttc(x)}{\d x^2}-  \ga e^{-x \ga}\frac{\d  \ttc(x)}{\d x} \le 0  \implies \frac{\d}{\d x}\left(e^{-x \ga} \frac{  \d  \ttc(x) }  {\d x} \right) \le 0
\end{align}
Driven by the last inequality, we have
% \begin{align}
% \frac{ e^{-x \ga} \d  \ttc(x) } {\d x} = \int_{x} ^{\infty} \galphat(b)\, \d b  \label{eq:diffc_first_derivative}
% \end{align}
\begin{align}
e^{-x \ga}\frac{  \d  \ttc(x) } {\d x} = \int_{x} ^{\infty} \galphat(b)\, \d b 
\implies \ttc(x) =   \int_{a=0} ^{a=x}   e^{a\ga  } \int_{b= a} ^{b= \infty} \galphat(b)\, \d b\,  \d a \label{eq:phiam}
% \label{eq:diffc_first_derivative}
\end{align}
% where $\galphat(\cdot) \ge 0$. Note that, we could have modeled the above quantity using a negative integrand in RHS, \eg,
% $\int_0 ^x - \galphat(b) \, \d b $. However, recall from Theorem~\ref{thm:diffc} that $\d \ttc(x)/\d x \ge 0$. Therefore,
% we resort to the modeling choice in Eq.~\eqref{eq:diffc_first_derivative}, which ensures that $\ttc(\cdot)$ is increasing. Finally, 
% %
% Eq.~\eqref{eq:diffc_first_derivative} gives us the following model for $\ttc (x)$.
% \begin{align}
% \label{eq:phiam}
%   \ttc(x) =   \int_{a=0} ^{a=x}   e^{a\ga  } \int_{b= a} ^{b= \infty} \galphat(b)\, \d b\,  \d a
% \end{align}

\xhdr{Parameterizing non-monotone submodular model} As suggested by Eq.~\eqref{eq:non-mon}, 
our model for non-monotone submodular function contains a non-monotone concave function $\psi_{\theta}$ and a modular function $m_{\theta}$. We model $m_{\theta}$ using the same parameterization used for the monotone set functions. We parameterize the $\psi_{\theta}$ as follows.

\ehdr{--- Parameterization of $\psi_{\theta}$} Modeling a  generic (possibly non-monotone) submodular function requires a general form of concave function $\psi_{\theta}$ which is not necessarily increasing.  The trick is to design $\psi_{\theta}(\cdot)$ in such a way that  its second derivative is negative everywhere, whereas its first derivative can have any sign. 
For $x \in [0,x_{\max}]$, we have:
\begin{align}
 \psi_{\theta}(x)&  = \int_{a=0} ^{a=x} \int_{b=a} ^{b=\infty} \gt(b) \, \d b \, \d a    - \int_{a=x_{\max}-x} ^{a=x_{\max}} \int_{b=a} ^{b=\infty}  \ggt(b) \, \d b \, \d a, \label{eq:psim}
\end{align}
where $\gt, \ggt \ge 0$. Moreover, we assume that $\int_0^\infty \gt (b)\, \d b$ and $\int_0^\infty \ggt  (b)\, \d b$    are convergent.
We use $x_{\max}$ in the upper limit of the second integral to ensure
that $\psi_{\theta}$ is normalized, \ie, $\psi_{\theta}(0)=0$.
Next, we compute the first derivative of $\psi_{\theta}(x)$ as:
\begin{align}
    \frac{\d \psi_{\theta}(x)}{\d x} &=
    \int_{b=x}^ {b=\infty} \gt(b) \, \d b   - \int_{b=x_{\max }-x} ^{b=\infty} \ggt(b) \, \d b,
\end{align}
which can have any sign, since both integrals are positive. Here, the second derivative of $\psi_{\theta}$ becomes
\begin{align}
    \frac{\d^2\psi_{\theta}(x)}{\d x^2} = -  \gt(x) -  \ggt(x_{\max}-x)  \le 0 \label{eq:psimx}
\end{align}
which implies that $\psi_{\theta}$ is concave. 
Similar to $\gt(\cdot)$, we can model  $\ggt(\cdot) = \relu(\Lambda^{g} _{\theta} (\cdot))$. Such a representation makes $\psi_{\theta}(\cdot)$   a universal approximator of normalized concave functions. As suggested by Eq.~\eqref{eq:non-mon}, $\psi_{\theta}$ takes $m_{\theta}(S)$ as input. Therefore, in practice, we set
% \begin{align}
$x_{\max} = \max_S m_{\theta}(S)$.
% \end{align}
 
\subsection{Parameter estimation from (\tset, \tvalue) pairs}
\label{sec:sec:train}
In this section, our goal is to learn
$\theta$ from a set of pairs $\set{(S_i,y_i)\given i\in [I]}$,  such that $y_i\approx F_{\theta}(S_i)$. Hence, our task is to solve the following optimization problem:
\begin{align}
\label{eq:loss}
\textstyle \min_{\theta} \sum_{i\in[I]}(y_i-F_{\theta}(S_i))^2
\end{align}
In the following, we discuss methods to solve this problem.

\xhdr{Backpropagation through double integral}
Each of our proposed set function models consists of one or more double integrals. 
Thus computation of the gradients of the loss function in Eq.~\eqref{eq:loss} requires gradients of these integrals.  To compute them,  we leverage the methods proposed by \citet{umnn}, which specifically provide forward and backward procedures for neural networks involving integration. Specifically, we use the Clenshaw-Curtis (CC) quadrature or Trapezoidal Rule for numerical computation of $\phi_{\theta}(\cdot)$ and $\psi_{\theta}(\cdot)$. On the other hand, we compute the gradients $\nabla_{\theta}\, \phi_{\theta}(\cdot)$ and $\nabla_{\theta}\, \psi_{\theta}(\cdot)$ by leveraging Leibniz integral rule~\cite{flanders1973differentiation}. In practice, we replace the upper limit ($b=\infty$) of the inner integral in Eqs.~\eqref{eq:phim},~\eqref{eq:phiam} and~\eqref{eq:psim} with  $b_{\max} =x_{\max}$ during training.

\xhdr{Decoupling into independent integrals}
The aforementioned end-to-end training can be challenging in practice, since the gradients are also integrals which can make loss convergence elusive. Moreover, it is also inefficient, since for each step of CC quadrature of the outer integral, we need to perform numerical integration of the entire inner integral. To tackle this limitation, we decouple the underlying double integral into two single integrals, parameterized using two neural networks.

\ehdr{--- Monotone submodular functions} During training our monotone submodular function model in Eq.~\eqref{eq:sub-model}, we model $\phi_{\theta}$ in Eq.~\eqref{eq:phim} as:
\begin{align}
    \phi_{\theta}(x) & = \int_{0} ^{x} \phi' _{\theta}(a) \, \d a,  \quad
     \phi' _{\theta}(x) = \int_{x} ^{\infty} h _{\beta}(a) \, \d a \label{eq:decouple-mon-sub}
\end{align}
In contrast to end-to-end training of $\theta$ where $\phi_{\theta}$ was realized only via neural network of $h_{\theta}$,
here we use two decoupled networks, \ie, $\phi' _{\theta}$ and $h_{\beta}$. Then, we learn 
$\theta$ and $\beta$ by minimizing the regularized sum of squared error, \ie,
\begin{align}
\min_{\theta,\beta} \sum_{i\in[I]} \sum_{n\in[N]}
\Big[ \reg \Big(\phi' _{\theta}(G^{(n)} (S_i))-\int_{G^{(n)}(S_i)} ^{\infty}
  h_{\beta}(a) \, \d a\Big)^2   
 + \big(y_i-F_{\theta}(S_i)\big)^2 \Big].  \label{eq:loss-mon-decoup}
\end{align}
Here, $G^{(n)}(S) = \lambda F^{(n-1)}(S) + (1-\lambda) m^{(n)} _{\theta}(S)$ is the input to $\phi_{\theta}$ in the recursion~\eqref{eq:sub-model}.
Recall that $N$ is the number of steps in the recursion.  Since $\phi_{\theta}$ is monotone, we need $\phi'_{\theta} \ge 0$ which is ensured by 
$\phi' _{\theta}(x) = \relu(\Lambda^{\phi'} _{\theta} (x))$.
In principle, we would like to have $ \phi' _{\theta}(x) = \int_{x} ^{\infty} h _{\beta}(a) \, \d a$ for all $x\in \RR$. In practice, we approximate this by penalizing the regularizer values in the domain of interest---the values which are fed as input to  $\phi_{\theta}$. 

While there are potential hazards to such an approximation, in our experiments, the benefits outweighed the risks.
The above parameterization involves only single integrals. The use of an auxiliary network $h_{\beta}$ allows more flexibility, leading to improved robustness during training optimization.
The above minimization task achieves approximate concavity of $\phi$ via training, whereas backpropagating through double integrals enforces concavity by design.
Appendix~\ref{app:decoupling} extends this method for \asb\ and non-monotone submodular functions.

% \input{050summarization}

% In many applications, the goal is to select an optimal subset from a universal set. For example, in product recommendation, given all the items, one may like to learn which subset of recommended items maximizes the utility from the user's perspective; in data summarization, given a collection of photographs, a user may choose a few representative samples to disclose   to friends. 
% then we present the probabilistic greedy model. Finally, we  
%  present  our proposed method which trains this greedy model so that the trained model becomes invariant to the order of input elements.

\section{{Differentiable subset selection}}
\label{sec:summarization}

% In Section~\ref{sec:sec:train}, we tackled the problem of learning $F_{\theta}$ from (\tset, \tvalue) pairs.  In this section, we consider learning $F_{\theta}$ from a given set of (\tuniv, \thigh) pair instances.
% Formally, the training set $\Ucal$ consists of $\setx{(V,\St)}$ pairs, where $V$ is some arbitrary subset of the universe of element, and $\St \subseteq V$ is a high-value subset of~$V$.
% Through this distant (compared to direct values) supervision, the system learns an underlying 
% function $F_{\theta}$.  This set function then drives the sampling of high-value sets from test instances of \tuniv{}s. 
In Section~\ref{sec:sec:train}, we tackled the problem of learning $F_{\theta}$ from (\tset, \tvalue) pairs.  In this section, we consider learning $F_{\theta}$ from a given set of (\tuniv, \thigh) pair instances.
% Formally, the training set $\Ucal$ consists of $\setx{(V,\St)}$ pairs, where $V$ is some arbitrary subset of the universe of element, and $\St \subseteq V$ is a high-value subset of~$V$.
% Through this distant (compared to direct values) supervision, the system learns an underlying 
% function $F_{\theta}$.  This set function then drives the sampling of high-value sets from test instances of \tuniv{}s. 

\subsection{Learning $F_{\theta}$ from (\tuniv, \thigh)}

Let us assume that the training set $\Ucal$ consists of $\setx{(V,\St)}$ pairs, where $V$ is some arbitrary subset of the universe of element, and $\St \subseteq V$ is a high-value subset of~$V$.
Given a set function model $F_{\theta}$, our goal is to estimate $\theta$ so that,
across all possible cardinality constrained subsets in $S'\subseteq V$
\todo{may result in protest: who gives this at test time and how?} with $|S'|=|S|$,
$F_{\theta}(\cdot)$ attains its maximum value at~$\St$.
Formally, we wish to estimate $\theta$ so that,
for all possible $(V,\St)\in \Ucal$, we have:
\begin{align}
\St &= \textstyle \argmax_{S' \subset V}\F(S'), \ \text{subject to} \ |S'| = |\St|. 
\label{eq:pr-st}
\end{align}

\subsection{Probabilistic greedy model}
\label{sec:sec:prob-greedy}

The problems of maximizing  both monotone submodular and monotone \asb\ functions rely on a greedy heuristic~\cite{nemhauser1978analysis}.
It sequentially chooses elements maximizing the marginal gain and hence, cannot directly support backpropagation.  \citet{tschiatschek2018differentiable} tackle this challenge with a probabilistic model which greedily samples elements
from a softmax distribution with the marginal gains as input. Having chosen the first $j$ elements of \thigh{} from \tuniv{} $V$, it draws the $(j+1)^{\text{th}}$ element from the remaining candidates in $V$ with probability proportional to the marginal utility of the candidate. If the elements of $S$ under permutation $\pi$ are written as $\pi(S)_j$  for $j\in[|S|]$, then the probability of selecting the elements of the set $S$ in the sequence $\pi(S)$ is given by
\begin{align}
 &\Pr{}_\theta (\pi(S)\given V) 
=\prod_{j=0} ^{|S|-1} \frac{\exp(\tau \F (\pi(S)_{j+1} \given \pi(S)_{\le j}))}{\sum_{s\in V\cp \pi(S)_{\le j}}
 \exp(\tau\F (s \given \pi(S)_{\le j}))} \label{eq:greedy}
\end{align}
Here, $F_{\theta}$ is the underlying submodular  function, $\tau$ is a temperature parameter and $S_{0} = \emptyset$.

\xhdr{Bottleneck in estimating $\theta$}  Note that, the above model~\eqref{eq:greedy} generates the elements in a sequential manner and is sensitive to~$\pi$.
\citet{tschiatschek2018differentiable} attempt to remove this sensitivity by maximizing the sum of probability \eqref{eq:greedy} over all possible~$\pi$:
\begin{align}
\theta^* &=
\argmax_{\theta}\sum_{(V,\St)\in\Ucal} 
\log \sum_{\pi\in \Pi_{|\St|}} \Pr{}_\theta (\pi(S) \given V).  \label{eq:sumperm}
\end{align}
However, enumerating all such permutations for even a medium-sized subset is expensive both in terms of time and space. They attempt to tackle this problem by approximating the mode and the mean of $\Pr (\cdot)$ over the permutation space $\Pi_{|\St|}$. But, it still require searching over the entire permutation space, which is extremely time consuming.

\subsection{Proposed approach}
Here, we describe our proposed method of permutation adversarial parameter estimation that avoids enumerating all possible permutations of the subset elements, 
while ensuring that the learned parameter $\theta^*$ remains permutation invariant. 

\xhdr{Max-min optimization problem} We first set up a max-min game between a permutation generator and
the maximum likelihood estimator (MLE) of $\theta$, similar to other applications~\cite{permgnn,prateek}.  Here, for each subset $S$, the permutation generator produces an adversarial permutation $\pi \in \Pi_{|S|}$
which induces a low value of the underlying likelihood. On the other hand,  MLE learns $\theta$ in the face of these adversarial permutations. Hence, we have the following optimization problem:
\begin{align}
 \max_{\theta} \min_{\pi \in \Pi_{|\St|}} \textstyle \sum_{(V,\St)\in \Ucal}
 \log \Pr{}_\theta (\pi(S) \given V) \label{eq:hard-maxmin}  
\end{align}

\xhdr{Differentiable surrogate for permutations}
Solving the inner minimization in \eqref{eq:hard-maxmin} requires searching for $\pi$ over $\Pi_{|S|}$, which appears to confer no benefit beyond~\eqref{eq:sumperm}. To sidestep this limitation, we relax the inner optimization problem by using a doubly stochastic matrix $\Pb \in \mathcal{P}_{|S|}$ as an approximation for the corresponding permutation~$\pi$.
Suppose $\Feat_S = [\feat_s]_{s\in \St}$ is the feature matrix where each row corresponds to an element of~$S$.  Then $\Feat_{\pi(S)} \approx \Pb \Feat_S$. 
Thus, $\Pr_\theta(\pi(S)|V)$ can be evaluated by iterating down the rows of $\Pb\Feat_S$ and, therefore, written in the form $\Pr_\theta(\Pb, S)$.    Thus, we get a continuous approximation to \eqref{eq:hard-maxmin}:
\begin{align}
\max_{\theta} \min_{\Pb \in \mathcal{P}_{|S|} }
\textstyle\sum_{(V,S)\in \Ucal} \log \Pr{}_\theta (\Pb, S \given V) \label{eq:soft-maxmin}  
\end{align}
so that we can learn $\theta$ by continuous optimization.
We generate the soft permutation matrices $\Pb$ using a Gumbel-Sinkhorn network~\cite{mena2018learning}. Given a subset $S$, it takes a seed matrix $\Bb_S$ as input and generates $\Pb^S$ in a recursive manner:
\begin{align}
 \Pb^{0} = \exp(\Bb_S/t); \quad
 \Pb^{k} = \DD_c \left(\DD_r\left(\Pb^{(k-1)} \right)\right) \label{eq:sinkh}
\end{align}
Here, $t$ is a temperature parameter; and, $\DD_c$ and $\DD_r$ provide column-wise and row-wise normalization. Thanks to these two operations, $\Pb^k$ tends to a doubly stochastic matrix for sufficiently large $k$.  Denoting $\Pb^\infty = \lim_{k\to\infty} \Pb^k$, one can show~\cite{mena2018learning} that
\begin{align*}
 \Pb^\infty = \argmax_{\Pb \in \mathcal{P}_{|S|}}
 \text{Tr} (\Pb^\top \Bb_S) - t \sum_{i,j} \Pb(i,j) \log \Pb(i,j)  
\end{align*}
where the temperature $t$ controls the `hardness' of the ``soft permutation'' $\Pb$.
As $t\to 0$, $\Pb$ tends to be a hard permutation matrix~\cite{cuturi2013sinkhorn}. 
The above procedure demands different seeds $\Bb_{\St}$ across different pairs of $(V, \St)\in \Ucal$, which makes the training  computationally expensive in terms of both time and space. Therefore, we model $\Bb_{\St}$ by feeding the feature matrix $\Feat_S$ into an additional neural network $G_{\omega}$ which is shared across different $(V,S)$ pairs. Then the optimization~\eqref{eq:soft-maxmin} reduces to $\max_{\theta} \min_{\omega}\sum_{(V,S)\in \Ucal}\log \Pr_\theta (\omega, S)$, with $\omega$ taking the place of~$\Pb$.

% \input{060expSyn}

% %%\vspace{-1mm}
\section{Experiments}
%%\vspace{-2mm}
\label{sec:expt}
In this section, we first evaluate \our\  using a variety of synthesized set functions and show that it is able to fit them under the supervision of $(\tset,\tvalue)$ pairs more accurately than the state-of-the-art methods.
% These experiments are consistent with recent work on learning submodular functions, except that our planted functions are more challenging.
% Appendix~\ref{app:addl}
% contains additional experiments.
Next, we use datasets gathered from Amazon baby registry ~\cite{gillenwater2014expectation} to show that \our\ can
learn to select data from  (\tuniv, \thigh) pairs more effectively than several baselines. Our code is in \href{https://tinyurl.com/flexsubnet}{https://tinyurl.com/flexsubnet}.

% %\vspace{-2mm}
\subsection{Training by (\tset, \tvalue) pairs}
% %\vspace{-1mm}
\label{sec:syn-setup}
\xhdr{Dataset generation} We generate $|V|{=}10^4$ samples, where we draw the feature vector $\feat_\els$ for each sample $\els{\in}V$ uniformly \todo{rev may claim these are basically random IDs therefore uninteresting} at random, \ie,
$\feat_\els{\in}\text{Unif}[0,1]^{d}$ with $d{=}10$. Then, we generate subsets $S$ of different sizes by randomly gathering elements from the set~$V$. 
Finally, we compute the values of $F(S)$ for different submodular functions $F$.
\todo{@AD: dangers of log(negative) how handled} 
Specifically, we consider seven planted set functions:
\begin{enumerate*}[(i)]
\item \textbf{Log}: $F(S) {=} \log (\sum_{\els\in S} \bm{1}^\top \feat_\els)$,

\item  \textbf{LogDet}: $F(S) = \log \text{det}(\II+\sum_{\els\in S}\feat_\els \feat_\els ^\top)$~\cite{shamaiah2010greedy,borodin2009determinantal},

\item \textbf{Facility location (FL)}: $F(S)= \sum_{s'\in V} \max_{\els\in S}  
\feat_\els^\top \feat_{s'}/ (||\feat_\els|| ||\feat_{s'}||)$~\cite{mirchandani1990discrete,frieze1974cost,du2012primal,ortiz2015multi},

\item \textbf{Monotone graph cut (\gcutm)}: 
In general, \gcut\ is computed using $F(S):= \sum_{u \in V, v \in S} \feat_u ^\top \feat_v $ $  - \tradeoff \sum_{u, v \in S} \feat_u ^\top \feat_v$. It measures the weighted cut across $(S, V\cp S)$ when the weight of the edge $(u,v)$ is computed as $\feat_u ^\top \feat_v$~\cite{iyer2021submodular,schrijver2003combinatorial,jegelka2009notes,jegelka2010cooperative,jegelka2011submodularity}. Here, $\tradeoff$ trades off between diversity and representation. We set $\tradeoff=0.1$. Note that \gcut\ is monotone (non-monotone) submodular when $\tradeoff{<}0.5$ ($\tradeoff{>}0.5)$. 

\item \textbf{\logxsqrt}: $F(S)$ ${=}$ $[\log (\sum_{\els\in S} \bm{1}^\top \feat_\els)]{\cdot}[\sum_{\els \in S} \bm{1}^\top\feat_\els]^{1/2}$,

\item \textbf{\logxdet}: $F(S) = [\log (\sum_{\els\in S} \bm{1}^\top \feat_\els)]\cdot[\log \text{det}(\II+\sum_{\els\in S}\feat_\els \feat_\els ^\top)]$ and

\item \textbf{Non monotone graph cut (\gcutn)}: It is the graph cut function in (iv) with $\tradeoff=0.8$. 
\end{enumerate*}
Among above set functions, (i)--(iv) are monotone submodular functions, (v)--(vi) are monotone \asb\ functions and (vii) is a non-monotone  submodular function. We set the number of steps in the recursions~\eqref{eq:sub-model},~\eqref{eq:alpha-sub-model} as $N=2$.

% This constitutes a diverse set of latent submodular functions.
% Among them, \textbf{Log} and \textbf{sqrt} are obtained by applying simple concave functions on a modular function, \textbf{Facility location} models  coverage, \textbf{LogDet} models diversity and \textbf{Graph cut} models network flow. 

\xhdr{Evaluation protocol}
% To evaluate the generalizability  of \our, we
% present a collection of synthetically generated pairs $\set{(S, F(S))}$ as input to \our\ which
% aims to approximate $F(S)$ using the underlying neural network $F_{\theta}(S)$. 
% %%
% Specifically, we  train $\theta$ by minimizing the squared error $\sum_{S\in \Scal_{\text{train}}}(F(S)-F_{\theta}(S))^2$ on the training set $\Scal_{\text{train}}$ and then
% predict the value of $F(T)$ on the test sets $T\in\Scal_{\text{test}}$. 
We sample $|V|{=}10000$ (\tset,\tvalue) instances as described above 
and split them into train, dev and test folds of equal size. We present the train and dev folds to the set function model and measure the performance in terms of RMSE on test fold instances.
In the first four datasets, we used our monotone submodular model described in Eq.~\eqref{eq:sub-model}; in case of \logxsqrt\ and \logxdet, we used our \asb\ model described in Eq.~\eqref{eq:alpha-sub-model}; and, for \gcutn, we used our non-monotone submodular model in Eq.~\eqref{eq:non-mon}.
For \asb\ model, we tuned $\alpha$ using cross validation.
Appendix~\ref{app:syn} provides hyperparameter tuning details for all methods.

\xhdr{Baselines} We compare \our\ against four state-of-the-art models, \viz,
\settx~\cite{lee2019set}, \deepset~\cite{zaheer2017deep}, deep submodular function (DSF)~\cite{bilmes2017deep} and mixture submodular function (SubMix)~\cite{tschiatschek2014learning}. In principle, set transformer shows high expressivity due to its ability to effectively incorporate the interaction between elements.
No other method including \our{} is able to incorporate such interaction. Thus, given sufficient network depth and width,
\settx\ should show higher accuracy than any other method. However, \settx\ consumes significantly higher GPU memory even with a small number of parameters. Therefore, to have a fair comparison with the rest non-interaction methods, we kept the parameters of \settx\ to be low enough so that it consumes same amount of GPU memory, as with other non-interaction based methods.

% We also compare \our with its variant where the number of  steps in RNN $N=1$. 
% Appendix~\ref{app:syn} contains more details about the baselines.

% \best{0.015 $\pm$ 0.000} & \best{0.013 $\pm$ 0.000} & \best{0.022 $\pm$ 0.000} \\
% {0.078 $\pm$ 0.001} & {0.073 $\pm$ 0.001} & {0.078 $\pm$ 0.001}
\begin{table*}[t]
\centering
\small
\adjustbox{max width=0.95\hsize}{ \tabcolsep 5pt
\begingroup \footnotesize
\begin{tabular}{l||c|c|c|c|c|c|c}
\hline 
& Log                      & LogDet            & FL           & \gcutm      & \logxsqrt & \logxdet & \gcutn  \\ \hline \hline 
% \our & \best{0.015 $\pm$ 0.013}  & \best{0.013 $\pm$ 0.007}  & \best{0.022 $\pm$ 0.021}  & \best{0.004 $\pm$  0.002}  & \best{0.032 $\pm$ 0.025}  & \best{0.025 $\pm$  0.019} & \best{0.068 $\pm$ 0.046}  \\ 
% %
% \settx & \secbest{0.060 $\pm$ 0.056}  & \secbest{0.029 $\pm$ 0.015}  & \secbest{0.063 $\pm$ 0.057 } & \secbest{0.014 $\pm$ 0.009 } & \secbest{0.037 $\pm$ 0.019 } & \secbest{0.051 $\pm$ 0.042 } &{0.171 $\pm$ 0.106}  \\
% %
% \deepset & 0.113 $\pm$ 0.100  & 0.044 $\pm$ 0.026  & 0.179 $\pm$ 0.152  & 0.058 $\pm$ 0.038  & 0.079 $\pm$ 0.050  & 0.070 $\pm$ 0.046   & \secbest{0.075 $\pm$ 0.046}  \\
% %
% DSF & 0.258 $\pm$ 0.179  & 0.684 $\pm$ 0.437  & 0.189 $\pm$ 0.146  & 0.778 $\pm$ 0.523  & 0.243 $\pm$ 0.153  & 0.274 $\pm$ 0.200  & 0.970 $\pm$ 0.588  \\
% %
% SubMix & 0.148 $\pm$ 0.117  & 0.063 $\pm$ 0.031  & 0.172 $\pm$ 0.125  & 0.018 $\pm$ 0.016 & 0.158 $\pm$ 0.103  & 0.162 $\pm$ 0.106 & 1.722 $\pm$ 0.846 \\

\our& \best{0.015 $\pm$ 0.000}  & \best{0.013 $\pm$ 0.000}  & \best{0.022 $\pm$ 0.000 } & \best{0.004 $\pm$ 0.000} & \best{0.032 $\pm$ 0.000 } & \best{0.025 $\pm$ 0.000} & \best{0.068 $\pm$ 0.001} \\ 
\settx & \secbest{0.060 $\pm$ 0.001}  & \secbest{0.029 $\pm$ 0.000 } & \secbest{0.063 $\pm$ 0.001  }& \secbest{0.014 $\pm$  0.000 }& \secbest{0.037 $\pm$ 0.000}  & \secbest{0.051 $\pm$ 0.001} & {0.171 $\pm$ 0.002} \\ 
\deepset & 0.113 $\pm$ 0.002  & 0.044 $\pm$ 0.000  & 0.179 $\pm$ 0.003  & 0.058 $\pm$ 0.001 & 0.079 $\pm$ 0.001  & 0.070 $\pm$ 0.001 & \secbest{0.075 $\pm$ 0.001} \\ 
DSF& 0.258 $\pm$ 0.003  & 0.684 $\pm$ 0.007  & 0.189 $\pm$ 0.003  & 0.778 $\pm$ 0.009 & 0.240 $\pm$ 0.003  & 0.274 $\pm$ 0.003 &0.970 $\pm$ 0.010 \\ 
SubMix &0.148 $\pm$ 0.002  & 0.063 $\pm$ 0.001  & 0.172 $\pm$ 0.002  & 0.018 $\pm$ 0.000 & 0.158 $\pm$ 0.002  & 0.154 $\pm$ 0.002 &1.722 $\pm$ 0.015 \\
\hline 
\end{tabular}
\endgroup}
\caption{Performance measured in terms of RMSE on synthetically generated examples using several set functions. %
% \viz, Log, LogDet, Facility Location (FL)
% and Graph cut (\gcutm), \logxsqrt, \logxdet\ and \gcutn\ for \our, \settx~\cite{lee2019set}, \deepset~\cite{zaheer2017deep}, deep submodular function (DSF)~\cite{bilmes2017deep} and mixture submodular function (SubMix)~\cite{tschiatschek2014learning}. Among them, the first four datasets are monotone submodular functions,
% \logxsqrt\ and \logxdet\  are \asb\ functions and 
% \gcutn\ is a non-monotone submodular function.
Number in \textbf{bold} font (\sec{underline}) indicate the best (second best) performers.}
\label{tab:syn-table}
% %\vspace{-1mm}
\end{table*}

% \subsection{Results}

\xhdr{Results}
We compare the performance of \our\ against the above four baselines
in terms of RMSE on the test fold. 
Table~\ref{tab:syn-table} summarizes the results.
We make the following observations. 
\begin{enumerate*}[(1)]
% [wide, labelwidth=*, labelindent=0pt, leftmargin=0pt]
\item \our\ outperforms the baselines by a substantial margin across all datasets.
\item Even with reduced number of parameters, \settx\ outperforms all the other baselines. 
% While both \deepset\ and \settx\ 
% ensure permutation invariant aggregation of the feature vectors $\set{\zb_s\given s\in S}$, 
\settx\ allows explicit interactions between set elements via all-to-all attention layers. As a result, it outperforms all other baselines even without explicitly using the knowledge of submodularity in its network architecture.
Note that, like \deepset, \our\ does not directly model any interaction between the elements.
However, it explicitly models  submodularity or $\alpha$-submodularity in the network architecture, which helps it outperform the baselines. We observe improvement of the performance of \settx, if we allow more number of parameters (and higher GPU memory). 
% Fixing the underlying concave function which may not match the true one severely affects training. 
% (ii) \deepset outperforms all other baselines. Since \deepset provides a universal set function approximator, it is able to learn the underlying function
% without explicitly using the knowledge of submodularity in the network architecture. 
%
%\item On an average, the performance of DSF is particularly poor.  Although in theory, DSF can capture a wide variety of submodular functions, fixing the concave function prevents it from realizing its full potential during data-driven learning.
%
\item While, in principle, DSF function family contains that of \our, standard multilayer instantations of DSF with fixed concave functions cannot learn well from (\tset,\tvalue) training.
\end{enumerate*}

 %\vspace{-2mm}
\subsection{Training by (\tuniv, \thigh)}
\label{sec:real-main}

\xhdr{Datasets} We use the Amazon baby registry dataset \cite{gillenwater2014expectation} which contains 17 product categories. Among them, we only consider those categories where $|V|>50$, where $V$ is the total number of items in the universal set.
These categories are:
\begin{enumerate*}[(i)]
\item Gear, \item Bath, \item Health, \item Diaper, \item Toys,
\item Bedding, \item Feeding, \item Apparel and \item Media.
\end{enumerate*}
They are also summarized in Appendix~\ref{app:real}. 

\xhdr{Evaluation protocol}
Each dataset contains a universal set $V$ and a set of subsets $\Scal=\set{S}$.
Each item $s$ is characterized by a short textual description such as ``bath: Skip Hop Moby Bathtub Elbow Rest, Blue : Bathtub Side Bumpers : Baby''.
From this text, we compute $\feat_s$ using BERT~\cite{devlin2018bert}.
We split $\Scal$ into equal-sized training ($\Scal_{\text{train}}$), dev ($\Scal_{\text{dev}}$) and test ($\Scal_{\text{test}}$) folds.
We disclose the training and dev sets to the submodular function models, which are trained using our proposed permutation adversarial approach (Section~\ref{sec:summarization}).  Then  the trained model is used to sample item sequence $S'$ with prefix $S'_{\le K}=\{s_1,\ldots,s_K\}$.  Here, $s_i$ is the item selected at the $i^{\text{th}}$ step of the greedy algorithm.
We assess the quality of the sampled sequence using two metrics.

\begin{table*}[t]
\centering
\adjustbox{max width=0.86\hsize}{ \tabcolsep 3pt
\begingroup \footnotesize
\begin{tabular}{p{1.1cm}||c|ccccc|c|ccccc}
\hline
 
& \multicolumn{6}{c|}{\textbf{Mean Jaccard Coefficient (MJC)}} & \multicolumn{6}{c}{\textbf{Mean NDCG@10}} \\
&\our  & DSF & SubMix & FL  & DPP & DisMin 
&\our  & DSF & SubMix & FL  & DPP & DisMin   \\ \hline\hline
Gear & \textbf{0.101} & \sec{0.099}  & 0.028 & 0.019 & 0.014 & 0.013  &  \textbf{ 0.539 } &  \sec{ 0.538 } &   0.449 &   0.433 &   0.425 &   0.426 \\ 
Bath & \textbf{0.091} & \sec{0.087}  & 0.038 & 0.020 & 0.012 & 0.010  & \textbf{ 0.520 } &  \sec{ 0.500} &   0.447 &   0.433 &   0.427 &   0.422 \\ 
Health & \textbf{0.153} & \sec{0.142}  & 0.022 & 0.084 & 0.011 & 0.015  &  \textbf{ 0.597 } &  \sec{ 0.549 } &   0.449 &   0.540 &   0.425 &   0.435 \\ 
Diaper & \textbf{0.134} & \sec{0.115}  & 0.023 & 0.018 & 0.013 & 0.012  &  \textbf{ 0.562 } &  \sec{ 0.546 } &   0.447 &   0.440 &   0.435 &   0.435 \\ 
Toys & \textbf{0.157} & \sec{0.150}  & 0.025 & 0.064 & 0.029 & 0.029  &     \textbf{ 0.591 } &  \sec{ 0.577 } &   0.446 &   0.472 &   0.448 &   0.449 \\ 
Bedding & \textbf{0.203} & \sec{0.191}  & 0.028 & 0.015 & 0.043 & 0.047  & \textbf{ 0.643 } &  \sec{ 0.623 } &   0.437 &   0.438 &   0.456 &   0.461 \\ 
Feeding & \textbf{0.100} & \sec{0.091}  & 0.026 & 0.023 & 0.020 & 0.019  & 
\textbf{ 0.550 } &  \sec{ 0.547 } &   0.459 &   0.453 &   0.454 &   0.452 \\ 
Apparel & \textbf{0.101} & \sec{0.093}  & 0.036 & 0.022 & 0.016 & 0.016  &  \textbf{ 0.558 } &  \sec{ 0.550 } &   0.459 &   0.452 &   0.446 &   0.444 \\ 
Media & \textbf{0.135} & \sec{0.130}  & 0.029 & 0.035 & 0.029 & 0.025  & \textbf{ 0.578 } &  \best{ 0.578 } &   0.474 &   0.470 &   0.461 &   0.461 \\
\hline\hline
\end{tabular}
\endgroup}
\caption{Prediction of subsets in product recommendation task. 
% Performance is measured in terms of Jaccard Coefficient (JC) and Mean NDCG@10 
%   for nine categories from Amazon baby registry records, for \our, Deep submodular function (DSF), mixture of submodular functions (SubMix),
% Facility location (FL), Determinantal point process (DPP) and Disparity Min (DisMin). 
% In all experiments, we use training, test, validation folds of equal size.
Numbers in \textbf{bold}   (\sec{underline})
indicate best (second best) performer. }
% %\vspace{-1mm}
\label{tab:real}
\end{table*}

\ehdr{Mean Jaccard coefficient (MJC)} Given a set $T$ in the test set, we first measure the overlap between $T$ and the first $|T|$ elements of $S'$  
using Jaccard coefficient, \ie, $JC(T) = |S'_{\le|T|}\cap T|/|S'_{\le|T|}\cup T|$ and then average over all subsets in the test set, \ie, $T\in \Scal_{\text{test}}$ to compute mean Jaccard coefficient~\cite{jaccard}. 

\ehdr{Mean NDCG@10} The greedy sampler outputs item sequence~$S'$. For each $T\in \Scal_{\text{test}}$, 
we compute the NDCG given by the order of first $10$ elements of $S'$, where $i\in S'$ is assigned a gold relevance label $1$, if
$i\in T$ and $0$, otherwise.
% $AP(T)=\frac{1}{|T|}\sum_{i\in [|S'\cup T|]} \II(i\in T) |S' _i \cap T|/i$, where $S' _i$ is the first $i$ elements in $S'$~\cite{schutze2008introduction}. 
Finally, we average over all test subsets 
to compute Mean NDCG@10.

\xhdr{Our method vs.\ baselines} We compare the subset selection ability of \our\ against several baselines which include two trainable submodular models: (i)~Deep submodular function (DSF) and
  (ii)~Mixture of submodular functions (SubMix) described in Section~\ref{sec:syn-setup};
two popular non-trainable submodular functions which include (iii)~Facility location (FL)~\cite{mirchandani1990discrete,frieze1974cost}, (iv)~Determinantal point process (DPP)~\cite{borodin2009determinantal}; and, a non-submodular function (v)~Disparity Min (DisMin) which is often used in data summarization~\cite{dasgupta2013summarization}. Here, we use the monotone submodular model of \our.  Appendix~\ref{app:real} contains more details about the baselines. We did not consider general purpose set functions, \eg, \settx, \deepset, etc., because they cannot be maximized using greedy-like algorithms and therefore, we cannot apply our proposed method in Section~\ref{sec:summarization}.  
 
\xhdr{Results}
\label{sec:results}
Table~\ref{tab:real} provides a comparative analysis across all candidate set functions, which shows that: 
%
%
%  \begin{wrapfigure}[7]{r}[-10pt]{0.22\textwidth}
% %\vspace{-3mm}
% \centering
% \includegraphics[width=0.24\textwidth]{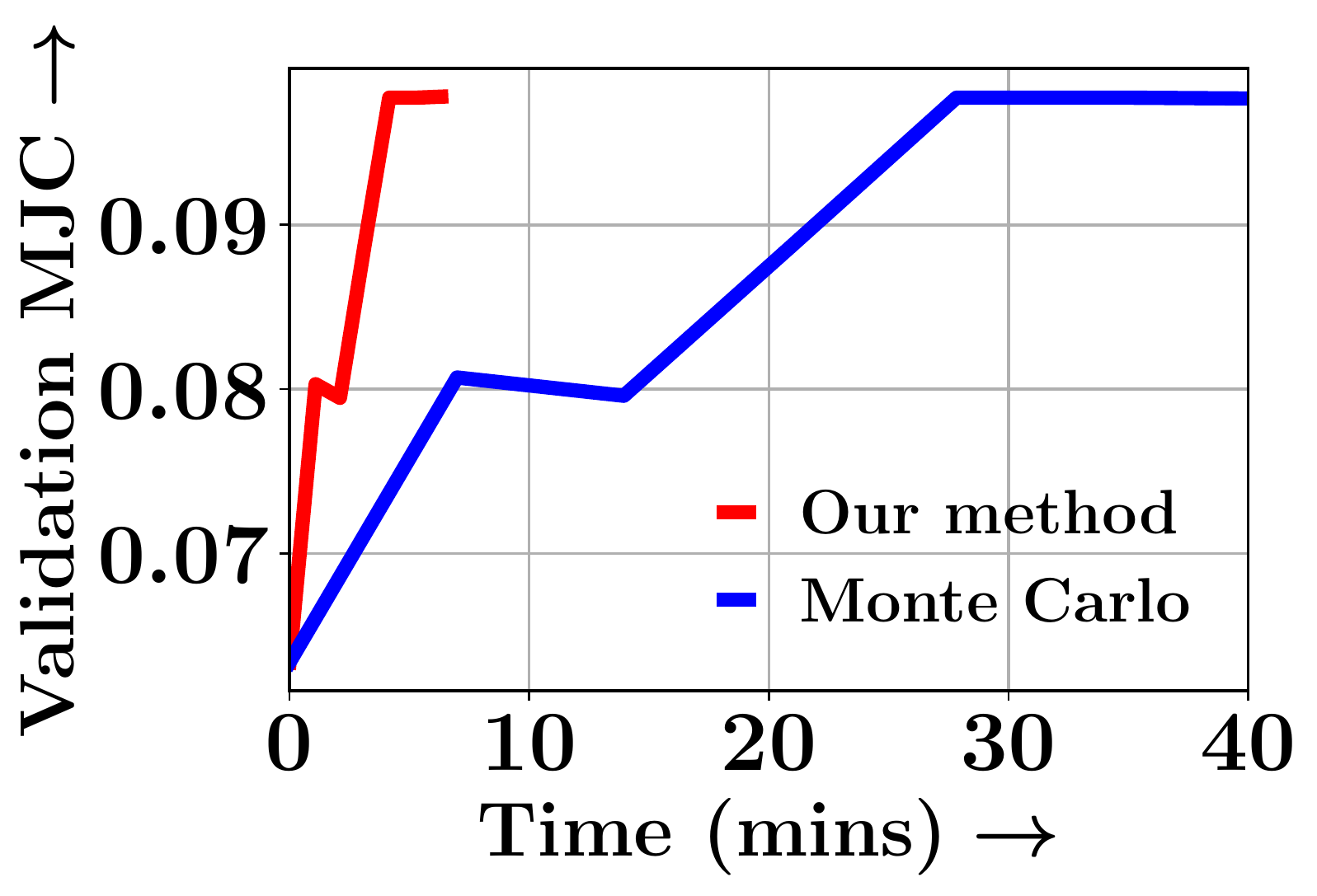}
% %\vspace{-4mm}
% \caption{Efficiency.}
% %\vspace{-1mm}
% \label{fig:efficiency}
% % \end{figure}
% % \end{minipage}
% \end{wrapfigure}
%
%
(i)~\our\ outperforms all the baselines across all the datasets; (ii)~DSF is the second best performer across all datasets;
and, (iii)~the performance of non-trainable set functions is poor,
as they are \emph{not trained to mimic} the set selection process.
% Recall our key motivation was to train parameters invariant to the  order of the elements of the input subset.
% \citet{tschiatschek2018differentiable} achieve this goal using a Monte Carlo average of the likelihood over many samples.   In Figure~\ref{fig:efficiency}, we compare our method with theirs for a representative dataset (Gear),  which shows that our method is ${\ge}4\times$ faster.

% \input{099end}
% %\vspace{-3mm}
\section{Conclusion}
\label{sec:end}
% %\vspace{-2mm}

We introduced \our: a family of submodular functions, represented by neural networks that implement quadrature-based numerical integration, and supports end-to-end backpropagation through these integration operators. We designed a permutation adversarial subset selection method, which ensures that the estimated parameters are independent of greedy item selection order.  On both synthetic and real datasets, \our{} improves upon recent competitive formulations. 
Our work opens up several avenues of future work.  One can extend our work for $\gamma$-weakly submodular functions~\cite{elenberg2016restricted}. 
Another  extension of our work is to leverage other connections to convexity, \eg, the Lovasz extension~\cite{lovas0,bach2} similar to~\citet{karalias2021neural}.

% Applications may not always provide supervision in the form of (set, value) but instead only provide high-value sets, as a form of distant supervision.  We extend \our{} to address this realistic scenario.  On both synthetic and real set function estimation and item set selection problems, \our{} improves upon recent competitive formulations.

\newpage
%\bibliography{refs}
%\bibliographystyle{plainnat}
\bibliography{refs}
\bibliographystyle{plainnat}

\newpage
\onecolumn
\appendix

\begin{center}
\Large\bfseries\ztitle \\ (Appendix)\\
% \normalsize \ad{New contents added during rebuttal are colored blue}.
\end{center}

 \section{Potential limitations of our work}
\label{app:our-lim}

One of the key limitations of our work is that our neural models are limited to modeling concave composed modular functions. However the class of submodular functions are larger. One of the way to address this problem to model a submodular function using Lovasz extension~\cite{lovas0,lovas1}. Another key limitation is our approach cannot model weakly submodular functions at large, which is superset of approximate submodular functions modeled here. We would like to extend our work in this context. Our differentiable subset selection method does not have access to supervision of set values. Thus training only from high value subset can lead to high bias towards a specific set of elements. An interesting direction would be to mitigate such bias.

\section{Proofs of the technical results for Section~\ref{sec:subm}}
\label{app:sec:alpha}

\subsection{Formal justification of the submodularity of \our\ given by Eq.~\eqref{eq:sub-model}}
% \label{app:sec:alpha}

\begin{proposition} 
Let  $m^{(n)} _{\theta} : 2^V \to \RR^+ $ be a modular function, i.e.,  $m^{(n)}  _{\theta} (S) = \sum_{s\in S}m^{(n)} _{\theta}(\{s\})$;   $\phi_{\theta}$ be a monotone concave function. Then, $F_{\theta}(S)$ computed using~Eq.~\eqref{eq:sub-model} is  monotone submodular.
\end{proposition}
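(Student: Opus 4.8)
The plan is to proceed by induction on $n$, showing that each $F^{(n)}_\theta$ defined by Eq.~\eqref{eq:sub-model} is both monotone and submodular, so that $F_\theta = F^{(N)}_\theta$ inherits these properties. The base case $n=0$ is immediate: $F^{(0)}_\theta = m^{(0)}_\theta$ is a positive modular function, hence monotone (its marginals are nonnegative since the linear layer has nonnegative weights and features are nonnegative) and trivially submodular (marginals are equality-invariant, a fortiori nonincreasing).

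For the inductive step, assume $F^{(n-1)}_\theta$ is monotone submodular. First I would observe that the inner combination $G^{(n)}(S) = \lambda F^{(n-1)}_\theta(S) + (1-\lambda) m^{(n)}_\theta(S)$ is monotone submodular, because nonnegative combinations of monotone submodular functions are monotone submodular (marginals add linearly, and $\lambda, 1-\lambda \ge 0$ since $\lambda \in [0,1]$) — here I use that $m^{(n)}_\theta$ is modular, hence in particular submodular and monotone. Also $G^{(n)}$ maps into $\RR^+$, which is what Proposition~\ref{prop-basic}(i) requires of its set-function argument. Then $F^{(n)}_\theta = \phi_\theta(G^{(n)}(\cdot))$ is the composition of an increasing concave function with a monotone submodular $\RR^+$-valued set function, so Proposition~\ref{prop-basic}(i) gives directly that $F^{(n)}_\theta$ is monotone submodular. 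This closes the induction, and taking $n=N$ finishes the proof.

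The one point that needs a little care — and which I would expect to be the main (though minor) obstacle — is verifying the hypotheses of Proposition~\ref{prop-basic}(i) hold \emph{exactly}: that $\phi_\theta$ really is increasing and concave (which follows from the neural parameterization in Eq.~\eqref{eq:phim}, since $\d^2\phi_\theta/\d x^2 = -g_\theta \le 0$ and $\d\phi_\theta/\d x = \int_x^\infty g_\theta \ge 0$, using $g_\theta \ge 0$ via the ReLU construction and convergence of the integral), and that $G^{(n)}$ stays nonnegative so that $\phi_\theta$ is only ever evaluated on its intended domain. The nonnegativity of $G^{(n)}$ follows from the nonnegativity of $m^{(n)}_\theta$ together with $F^{(n-1)}_\theta \ge 0$, which itself is an easy additional invariant to carry through the induction (it holds at $n=0$, and $\phi_\theta(x) \ge 0$ for $x \ge 0$ since $\phi_\theta$ is increasing and normalized with $\phi_\theta(0)=0$). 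Normalization $F_\theta(\varnothing) = 0$ likewise propagates since $m^{(n)}_\theta(\varnothing)=0$ and $\phi_\theta(0)=0$. All of these are routine once stated, so the proof is essentially a clean induction wrapped around Proposition~\ref{prop-basic}(i).
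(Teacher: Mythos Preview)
Your proposal is correct and follows essentially the same approach as the paper: induction on $n$, noting that the convex combination $\lambda F^{(n-1)}_\theta + (1-\lambda) m^{(n)}_\theta$ is monotone submodular, and then invoking Proposition~\ref{prop-basic}(i). Your version is in fact more careful than the paper's in verifying the nonnegativity, normalization, and concavity hypotheses explicitly, but the structure is identical.
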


%\vspace{-3mm}
\begin{proof}
 We proof this by induction. Clearly,  $F^{(0)}$ is monotone submodular. Now, assume that $F^{(n-1)}(S)$ is monotone submodular.
 Then,  $R(S) = \lambda \cdot F^{(n-1)} _{\theta}(S)  + (1-\lambda)\cdot m_{\theta} ^{(n)} (S)$ is monotone submodular. Hence, from Proposition~\ref{prop-basic} (i), we have $F^{(n)} _{\theta}(S) = \phi_{\theta} (R(S))$ to be submodular.
%  To prove $R(S)$ is \asb, we proceed as follows: 
%  \begin{align}
%   R(s\given S) & = \lambda \cdot F^{(n-1)}(s\given S)  + (1-\lambda)\cdot m_{\theta} ^{(n)} ({s})\nn\\
%   & \ge \lambda \alpha \cdot F^{(n-1)}(s\given T)  + \alpha (1-\lambda)\cdot m_{\theta} ^{(n)} ({s}) \quad \text{( Since, $F(S)$ is \asb\ and $\alpha  m_{\theta} ^{(n)} ({s}) <  m_{\theta} ^{(n)} ({s})$)}\nn\\
%  & \ge R(s\given T).
%  \end{align}
\end{proof}
\subsection{Proof of Theorem~\ref{thm:diffc}}

\begin{numtheorem}{\ref{thm:diffc}}
 Given the functions $\diffc:\RR~\to~\RR^+$ and $m:V~\to~[0,1]$. Then, the set function $F(S) =  \diffc(\sum_{s\in S}m(\set{s}))$  is monotone 
 $\alpha$-submodular for  $|S|\le k$, if $\diffc(x)$ is increasing in $x$ and
 \begin{align}
 \frac{\p ^2 \diffc(x)}{\p x^2} \le \frac{1}{k}\log\left(\frac{1}{\alpha}\right)\frac{\p  \diffc(x)}{\p x} 
\end{align}
 \end{numtheorem}
\begin{proof} Since, both $\diffc$ and $F$ is monotone, $\diffc\circ F$ is monotone. Shifting $x$ to $x+y$ with $y>0$, we have:
 \begin{align}
  \frac{\p ^2 \diffc(x+y)}{\p y^2} \le \ga \frac{\p  \diffc(x+y)}{\p y}\label{eq:diffc-inter} 
\end{align}
where, $\ga = \frac{1}{k}\log\left(\frac{1}{\alpha}\right)$. Eq.~\eqref{eq:diffc-inter}
implies that
\begin{align}
&  e^{-y\ga} \frac{\p ^2 \diffc(x+y)}{\p y^2}-  \ga e^{-y\ga}\frac{\p  \diffc(x+y)}{\p y} \le 0  \implies \frac{\p}{\p y}\left( \frac{ e^{-y\ga} \p  \diffc(x+y) }  {\p y} \right) \le 0
\end{align}
Hence, $\displaystyle \frac{ e^{-y\ga} \p  \diffc(x+y) }  {\p y}$ is decreasing function in $y$. Hence, 
\begin{align}
 \frac{ e^{-y\ga} \p  \diffc(x+y) }  {\p y} \le \left. \frac{ e^{-y\ga} \p  \diffc(x+y) }  {\p y} \right| _{y=0} =  \frac{  \p  \diffc(x) }  {\p x} \label{eq:inter}
\end{align}
Next, we define $\diffc_S(\bullet) = \diffc(\bullet+m(S))$ for all $S$ and then we compute 
\begin{align}
\frac{ \diffc(m(S\cup s))- \diffc(m(S))}{ \diffc(m(T\cup s))- \diffc(m(T))}& =  \frac{ \diffc(m({s}) + m(S))- \diffc(m(S))}{ \diffc(m({s}) + m(T))- \diffc(m(T))} \qquad \text{(Since $m$ is modular)} \\
& = \frac{ \diffc_S(m({s}))- \diffc_S(0)}{ \diffc_T(m({s}))- \diffc_T(0)} \\
& =  \dfrac{\left.\dfrac{  \p  \diffc_S(x) }  {\p x} \right|_{x=c}}{\left.\dfrac{  \p  \diffc_T(x) }  {\p x}\right|_{x=c}}\qquad\text{(for some $c\in(0, m({s}))$; Cauchy's mean value Theorem)}  \nn\\
& \ge \exp(-\ga [m(T)-m(S)]) \qquad\text{(Using Eq.~\eqref{eq:inter})}\nn\\
& \ge \exp(-\log(1/\alpha)) \qquad \text{(Since $|S|, |T|\le k$)}
\end{align}
\end{proof}

\subsection{Proof of proposition~\ref{prop:alpha1}}

\begin{numproposition}{\ref{prop:alpha1}}
Given a monotone $\alpha$-submodular function $F(\bullet)$. Then, $\phi(F(S))$ is monotone $\alpha$-submodular, if $\phi(\bullet)$  is an increasing concave function.
\end{numproposition}

\begin{proof}  
Assume $S\subset T$ and $s\in V\cp T$. Since $F(\bullet)$ is monotone, we have $F(S)\le F(T)$ and $F(S\cup s)\le F(T\cup s)$. Using concavity of $\diffc$ and comparing the slope of two chords, we have:
 \begin{align}
&  \frac{\diffc(F(S\cup s))-\diffc(F(S))}{F(S\cup s)-F(S)}  \ge   \frac{\diffc(F(T\cup s))-\diffc(F(T))}{F(T\cup s)-F(T)} \nn\\
\implies &\diffc(F(S\cup s))-\diffc(F(S)) \ge  \frac{F(S\cup s)-F(S)}{F(T\cup s)-F(T)} \left[\diffc(F(T\cup s))-\diffc(F(T))\right] \label{eq:chord}
 \end{align}
$F$ is monotone. Hence, the last inequality is obtained by multiply both sides by the positive quantity ${F(S\cup s)-F(S)}$ which keeps the sign of the inequality unchanged.

Now, since $\diffc$ is an increasing function and $F$ is monotone, $\diffc(F(T\cup s))-\diffc(F(T)) \ge 0$. Then, since $F$ is monotone \asb, we have $ \frac{F(S\cup s)-F(S)}{F(T\cup s)-F(T)} \ge {\alpha}$. Hence, we have $\frac{F(S\cup s)-F(S)}{F(T\cup s)-F(T)} \left[\diffc(F(T\cup s))-\diffc(F(T)) \right]\ge  {\alpha}[\diffc(F(T\cup s))-\diffc(F(T))]$. Together with Eq.~\eqref{eq:chord}, it finally gives
$$ \diffc(F(S\cup s))-\diffc(F(S))  \ge \frac{F(S\cup s)-F(S)}{F(T\cup s)-F(T)} \left[\diffc(F(T\cup s))-\diffc(F(T)) \right]\ge {\alpha}[  \diffc(F(T\cup s))-\diffc(F(T))].$$
% Moreover $F$ is increasing and $\diffc$ is increasing. Hence $\diffc \circ F$ is increasing.
\end{proof}
 
\subsection{Proof of Proposition~\ref{prop:ua}}
 \begin{numproposition}{\ref{prop:ua}}
Given an universal set $V$, a constant $\epsilon>0$ and a submodular function $F(S) = \phi(\sum_{s\in S} m(\zb_s))$  where   $\zb_s \in \RR^d$, $S\subset V$, $0 \le m(\zb) < \infty$ for all $\zb \in \RR^d$. Note that here, $F$ is normalized and therefore, $F(\emptyset)=\phi(0)=0$. Then there exists two fully connected neural networks   $m_{\theta_1}$  and $h_{\theta_2} $   of width $d+4$ and $5$ respectively, each with ReLU activation function, such that the following conditions hold:
\begin{align}
   \left|F(S) -   \int_{a=0} ^{a=\sum_{s\in S}m_{\theta_1}(\zb_s)} \int_{b=a} ^{b=\infty} h_{\theta_2}(b) \, \d b \, \d a. \right|    \le \epsilon \quad \forall \ S\subset V\label{eq:ua}
\end{align}
\end{numproposition}
 
\begin{proof}
Since our functions are normalized, we have $F(\emptyset) = 0=\phi(0)$.  Assume $d\phi(b)/db \to 0$ as $b\to \infty$. 
 Note that the condition that $\lim_{b\to\infty} d\phi(b)/db \to 0$ is not a restriction since the maximum value of $x$ that goes as input to provide output $\phi(x)$ is $x_{\max} = \sum_{s\in V} m (\zb_s)$ which is finite. Therefore, one can always define $\phi(\bullet)$ outside that regime ($x>x_{\max}$) as constant zero. Then we can write:
\begin{align}
    \phi(x) = \int_{a=0} ^{a=x} \int_{b=a} ^{b=\infty} \left[-\frac{d^2\phi (b)}{db^2}\right] \, \d b \, \d a   \label{eq:phi=phi1}
\end{align}
 
To prove the above, one can define the RHS of Eq.~\eqref{eq:phi=phi1} as say, $\phi_1(x)$.
We note that $\phi(0)=0$ because it is normalized. Thus,
we have:
\begin{align}
    &\phi_1(0)=\phi(0) \\
    &\frac{d\phi_1(x)}{dx}=\frac{d\phi(x)}{dx} \quad \text{(Since, $\lim_{b\to\infty} d\phi(b)/db \to 0$)} 
\end{align}
These two conditions give us:
$\phi(x) = \phi_1(x)$. Now, we can write $F(S)$ as follows:
% $\phi_1(0)=\phi(0)$. Moreover, since  $\lim_{b\to\infty} d\phi(b)/db \to 0$, we have $\frac{d\phi(x)}{dx}=\frac{d\phi_1(x)}{dx}$. 
 \begin{align}
 F(S) & =   \int_{a=0} ^{a=\sum_{s\in S}m(\zb_s)} \int_{b=a} ^{b=\infty} \left[-\frac{d^2\phi (b)}{db^2}\right] \, \d b \, \d a\end{align}
Let us define $h(b) =-\frac{d^2\phi (b)}{db^2} $. Choose $\epsilon_m >0$ and $\epsilon_h>0$. According to~\cite[Theorem 1]{lu2017expressive}, we can say that it is possible to find ReLU neural networks $m_{\theta_1}$ and $h_{\theta_2}$ for widths $d+4$ and $5$ respectively, for which
we have, 
\begin{align}
&    \int_{\RR^d}|m(\zb)-m_{\theta_1}(\zb)| d\zb < \epsilon_m\\
& \int_{\RR^+ }|h(b)-h_{\theta_2}(b)| \d b < \epsilon_h
\end{align}
For continuous functions $m$,  the first condition suggest that there exists $\epsilon' _m$ for which $|m(\zb)-m_{\theta_1}(\zb)| \le \epsilon' _m$. 

Then we have the following:
\begin{align}
  & F(S) -   \int_{a=0} ^{a=\sum_{s\in S}m_{\theta_1}(\zb_s)} \int_{b=a} ^{b=\infty}  h_{\theta_2}(b)\, \d b \, \d a \\
  &=  \int_{a=0} ^{a=\sum_{s\in S}m(\zb_s)} \int_{b=a} ^{b=\infty}  h(b)\, \d b \, \d a -   \int_{a=0} ^{a=\sum_{s\in S}m_{\theta_1}(\zb_s)} \int_{b=a} ^{b=\infty}  h (b)\, \d b \, \d a \\
  &\quad +   \int_{a=0} ^{a=\sum_{s\in S}m_{\theta_1}(\zb_s)} \int_{b=a} ^{b=\infty}  h (b)\, \d b \, \d a- \int_{a=0} ^{a=\sum_{s\in S}m_{\theta_1}(\zb_s)} \int_{b=a} ^{b=\infty}  h_{\theta_2}(b)\, \d b \, \d a
  \end{align}
  \begin{align}
  & = \int_{a=\sum_{s\in S} m_{\theta_1}(\zb_s)} ^{a=\sum_{s\in S} m(\zb_s)} \int_{b=a} ^{b=\infty}  h(b)\, \d b \, \d a + \int_{a=\sum_{s\in S} m (\zb_s)} ^{a=\sum_{s\in S} m_{\theta_1}(\zb_s)} \int_{b=a} ^{b=\infty}  [h(b) -h_{\theta_2}(b)]\, \d b \, \d a\\
  & \qquad \qquad + \int_{a=\sum_{s\in S} m(\zb_s)} ^{0} \int_{b=a} ^{b=\infty}  [h(b) -h_{\theta_2}(b)]\, \d b \, \d a
\end{align}
This gives us:
\begin{align}
    &\left|F(S) -   \int_{a=0} ^{a=\sum_{s\in S}m_{\theta_1}(\zb_s)} \int_{b=a} ^{b=\infty}  h_{\theta_2}(b)\, \d b \, \d a\right|\\
    &\quad \le |V|\epsilon' _m \int_{b=0} ^{b=\infty}  h(b)\, \d b \, \d a +|V|\epsilon' _m \epsilon_h + +|V|m_{\max} \epsilon_h. 
\end{align}
Here, $m_{\max} = \max_{s\in V} m(\zb_s)$. One can choose $\epsilon_{\bullet}$ in order to set the RHS to $\epsilon$.
 \end{proof} 
% \begin{numproposition}{\ref{prop:neural-alpha}}
% Let  $m^{(n)} _{\theta} : 2^V \to \RR^+ $ be a modular function, i.e.,  $m^{(n)}  _{\theta} (S) = \sum_{s\in S}m^{(n)} _{\theta}(\{s\})$; $\diffc_{\theta}(\bullet)$ satisfy the conditions of Theorem~\ref{thm:diffc} with $\alpha\in(0,1)$; $\phi_{\theta}$ be a monotone concave function. Then, $F_{\theta}(S)$ is a monotone \asb\ function. 
% \end{numproposition}

\subsection{Formal result showing that $F_{\theta}$ computed using Eq.~\eqref{eq:alpha-sub-model}  is $\alpha$-submodular}

\begin{proposition}
\label{prop:neural-alpha}
Let  $m^{(n)} _{\theta} : 2^V \to \RR^+ $ be a modular function, i.e.,  $m^{(n)}  _{\theta} (S) = \sum_{s\in S}m^{(n)} _{\theta}(\{s\})$; $\diffc_{\theta}(\bullet)$ satisfy the conditions of Theorem~\ref{thm:diffc} with $\alpha\in(0,1)$; $\phi_{\theta}$ be a monotone concave function. Then, $F_{\theta}(S)$ computed using Eq.~\eqref{eq:alpha-sub-model} is a monotone \asb\ function. 
\end{proposition}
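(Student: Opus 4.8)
The plan is to proceed by induction on the recursion depth $n$, closely mirroring the proof for the submodular case (Eq.~\eqref{eq:sub-model}) but invoking the $\alpha$-submodular analogues of the basic facts. First I would establish the base case: $F^{(0)}(S) = \diffc_{\theta}(m^{(0)}_{\theta}(S))$, where $m^{(0)}_{\theta}$ is positive modular and $\diffc_{\theta}$ satisfies the hypotheses of Theorem~\ref{thm:diffc}. After rescaling so that the modular values lie in $[0,1]$ (the linear layer with positive weights can be normalized, or one absorbs the scale into the constant $\ga$), Theorem~\ref{thm:diffc} gives directly that $F^{(0)}$ is monotone $\alpha$-submodular for sets of size at most $k$. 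This handles $n=0$.

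For the inductive step, I would assume $F^{(n-1)}$ is monotone $\alpha$-submodular and consider $F^{(n)}(S) = \phi_{\theta}\big(\lambda F^{(n-1)}(S) + (1-\lambda) m^{(n)}_{\theta}(S)\big)$. The key intermediate claim is that $R(S) := \lambda F^{(n-1)}(S) + (1-\lambda) m^{(n)}_{\theta}(S)$ is itself monotone $\alpha$-submodular. Monotonicity is clear since both summands are monotone and $\lambda \in [0,1]$. For the $\alpha$-submodularity, note that a modular function trivially satisfies $m(s\mid S) = m(s\mid T) \ge \alpha\, m(s\mid T)$ for $\alpha \le 1$, i.e.\ every positive modular function is $\alpha$-submodular, and a nonnegative combination of $\alpha$-submodular functions is $\alpha$-submodular (the defining inequality $F(s\mid S) \ge \alpha F(s\mid T)$ is preserved under addition and positive scaling). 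Hence $R$ is monotone $\alpha$-submodular. Then Proposition~\ref{prop:alpha1} applies with the increasing concave $\phi_{\theta}$ to conclude $F^{(n)} = \phi_{\theta}(R(\cdot))$ is monotone $\alpha$-submodular with the same $\alpha$. Since $F_{\theta}(S) = F^{(N)}(S)$, the result follows.

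I should also check normalization is preserved along the way so that the $\alpha$-submodularity statements are about genuinely normalized set functions: $m^{(n)}_{\theta}(\varnothing) = 0$, $\diffc_{\theta}(0) = 0$ and $\phi_{\theta}(0) = 0$ by construction (Eq.~\eqref{eq:phim} and~\eqref{eq:phiam}), so each $F^{(n)}(\varnothing) = 0$ by an easy induction, and this is worth stating explicitly as a preliminary remark.

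The only genuinely delicate point — and the one I expect to need care — is the interaction between the cardinality bound $|S| \le k$ appearing in Theorem~\ref{thm:diffc} and the recursion: Theorem~\ref{thm:diffc} only guarantees $\alpha$-submodularity of $F^{(0)}$ on sets of size at most $k$, and one must verify that Proposition~\ref{prop:alpha1} propagates $\alpha$-submodularity without needing any fact about $F^{(n-1)}$ beyond the marginal-ratio inequality on the relevant sets, which it does since its proof only uses monotonicity of $F$ and the single inequality $F(s\mid S)/F(s\mid T) \ge \alpha$ for the particular $S \subseteq T$ at hand. So the cardinality restriction simply carries through unchanged, and the final statement holds for $|S| \le k$; I would phrase Proposition~\ref{prop:neural-alpha} (or its proof) accordingly, noting this restriction matches the one in Theorem~\ref{thm:diffc}. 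A secondary routine check is that $R(S)$ taking values in $\RR^+$ is compatible with the domain on which $\phi_{\theta}$ is defined and concave, which follows from positivity of $F^{(n-1)}$, $m^{(n)}_{\theta}$ and the construction of $\phi_{\theta}$.
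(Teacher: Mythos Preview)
Your proposal is correct and follows essentially the same induction as the paper: the base case via Theorem~\ref{thm:diffc}, the inductive step by showing $R(S)=\lambda F^{(n-1)}(S)+(1-\lambda)m^{(n)}_{\theta}(S)$ is monotone $\alpha$-submodular (the paper spells out the marginal inequality $R(s\mid S)\ge \alpha R(s\mid T)$ directly rather than invoking closure under nonnegative combinations, but the content is identical), and then Proposition~\ref{prop:alpha1}. Your additional remarks on normalization, the cardinality bound $|S|\le k$, and the $m:V\to[0,1]$ range hypothesis are careful observations that the paper's proof silently assumes.
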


\ehdr{Proof} We proof this by induction. From  Theorem~\ref{thm:diffc}, $F^{(0)}$ is monotone \asb. Now, assume that $F^{(n-1)}(S)$ is monotone \asb.
 Then, we see that $R(S) = \lambda \cdot F^{(n-1)}(S)  + (1-\lambda)\cdot m_{\theta} ^{(n)} (S)$ is monotone. To prove $R(S)$ is \asb, we proceed as follows: 
 \begin{align}
  R(s\given S) & = \lambda \cdot F^{(n-1)}(s\given S)  + (1-\lambda)\cdot m_{\theta} ^{(n)} ({s})\nn\\
  & \ge \lambda \alpha \cdot F^{(n-1)}(s\given T)  + \alpha (1-\lambda)\cdot m_{\theta} ^{(n)} ({s}) \  \text{ ($F(S)$ is \asb, $\alpha  m_{\theta} ^{(n)} ({s}) <  m_{\theta} ^{(n)} ({s})$)}\nn\\
 & \ge \alpha R(s\given T).
 \end{align}
Applying Proposition~\ref{prop:alpha1} on the above leads to the final result. 

%\vspace{-1mm}
\section{Decoupling integrals for \asb\ and non-monotone submodular functions}
\label{app:decoupling}
\emph{--- Monotone \asb\ function.} In case of monotone \asb\ functions, we first break the double integral of $\phi_{\theta}$ used in the recursion~\eqref{eq:alpha-sub-model} using the two single integrals described in Eq.~\eqref{eq:decouple-mon-sub}. In addition, we  
need to model $\diffc_{\theta}$ in Eq.~\eqref{eq:phiam} as follows:
\begin{align}
%\vspace{-2mm}
   \hspace{-3mm} \diffc_{\theta}(x) & = \int_{0} ^{x} \diffc' _{\theta}(a) \, \d a,  \ \
     \diffc' _{\theta}(x) = \int_{x} ^{\infty} \hspace{-3mm} e^{a\ga} g_{\gamma}(a) \, \d a \label{eq:decouple-mon-sub-phi}
\end{align}
Then, we apply two regularizers with the loss~\eqref{eq:loss}: one for $\phi' _{\theta}$ similar to Eq.~\eqref{eq:loss-mon-decoup} and the other for the second integral equation in Eq.~\eqref{eq:decouple-mon-sub-phi} which is computed as
$\sum_{i \in [I]}\rho\left(\diffc' _{\theta}(m_{\theta} ^{(0)} (S_i)) -\int_{m_{\theta} ^{(0)}(S_i)} ^{\infty}  e^{a\ga} g_{\gamma}(a) \, \d a \right)^2$.
Then, we minimize the regularized loss with respect to $\theta,\beta, \gamma$. 
 
\ehdr{--- Non-monotone submodular function} In case of  non-monotone submodular function models, we decouple $\psi_{\theta}$ in Eq.~\eqref{eq:psim} into the following set of intergals:
\begin{align}
%\vspace{-1mm}
&\hspace{-3mm} \psi_{\theta}(x)  = \int_{0} ^{x} \psi' _{h,\theta}(a) \d a - \int_{x_{\max}-x} ^{x_{\max}} \psi' _{g,\theta}(a)  \, \d a \\ 
&\hspace{-3mm} \psi' _{h,\theta}(x)  = \int_{x} ^{\infty} h_{\beta}(b), \d b,\quad 
\psi' _{g,\theta}(x)  = \int_{x} ^{\infty} g_{\gamma}(b) \, \d b \label{eq:non-mon-integral} \\[-3ex]\nn
%      \psi_{\theta}(x)&  = \int_{a=0} ^{a=x} \int_{b=a} ^{b=\infty} \gt(b) \, \d b \, \d a \nn \\
% &\qquad\quad- \int_{a=x_{\max}-x} ^{a=x_{\max}} \int_{b=0} ^{b=a}  \ggt(b) \, \d b \, \d a, \label{eq:psim}
\end{align}
Then, we add the regularizers corresponding to the last two integral equations~\eqref{eq:non-mon-integral}
% \begin{align}
% & \sum_{i\in[I]}\rho(\psi' _{h,\theta}\bigg(m_{\theta}(S_i))  - \int_{m_{\theta}(S_i)} ^{\infty}  h_{\beta}(b), \d b \bigg)^2, \text{  and, }\nn\\ 
% &  \sum_{i\in[I]}\rho(\psi' _{g,\theta}\bigg(m_{\theta}(S_i))  - \int_{m_{\theta}(S_i)} ^{\infty} g_{\gamma}(b), \d b\bigg)^2
% \nn
%  \end{align}
to the loss~\eqref{eq:loss} and then minimize it to train $\theta, \beta, \phi$.

\section{Additional  details about experiments on learning from (\tset, \tvalue) pairs}
\label{app:syn}
\subsection{Additional details about the data generation}
As mentioned in Section~\ref{sec:syn-setup}, we generate $|V|=10^4$ items. We draw the feature vector $\feat_\els$ for each item $\els\in V$ uniformly at random \ie,
$\feat_\els\in \text{Unif}[0,1]^{d}$, where $d=10$. Here, we use such a generative process for the features since our synthetic set functions often require positive input. Then, we generate subsets $S$ of different sizes by gathering elements from the universal set $V$ as follows. We randomly shuffle the elements of $V$ to obtain a sequence of elements $\set{s_1,...,s_{|V|}}$. 
We construct $|V|$ subsets by gathering top $j$ elements for $j=1,..,|V|$, \ie,
$\Scal =\set{S} = \set{\set{s_1,...,s_j}\,|\, j\le |V|}$.  
\todo{AD here}

\subsection{Implementation details of \our}

We model the integrands $h_{\theta}$ of the submodular funcions using one input layer, three hidden layers and one output layer, each with width 50. Here, the input and hidden layers are built using one
linear and ReLU units and the output layer is an ELU activation unit. For forward and backward passes under integration, we adapt the code provided by~\citet{umnn}
in our setup. Moreover we choose the modular function $m_{\theta} ^{(n)}(\set{s}) = \theta_m ^\top \zb_s$.
We set the value of maximum number of steps using cross validation, which gives $N=2$ for all datasets. 
We set the value of weight decay as $1e-4$ and learning rate as $2e-3$.

\subsection{Implementation details of the baselines}

\xhdr{\settx~\cite{lee2019set}} Our implementation of \settx\ has one input layer, two hidden layers and one output layer. We would like to hightlight that with increased number of parameters, \settx\ consumed large GPU memory (max GPU usage > 8GB for even 29 parameters). This is because of two reasons: (1) the set transformer performs all-to-all attention architecture and (2) the size of universe in our experiments is $|V|=10000$. \settx\ involves several concatenation operation which blows up the intermediate tensors.  

\xhdr{Deep set~\cite{zaheer2017deep}} Our implementation of deep set model is  similar to the integrand of \our, \ie, it consists of one input layer, three hidden layers and one output layer, each with width 50. Here, the input and hidden layers is built of one linear and ReLU units whereas, the output layer is an ELU activation unit. Here, we set the value of weight decay as $10^{-4}$ and learning rate as $2{\times}10^{-3}$.  

\xhdr{Deep submodular function (DSF)~\cite{bilmes2017deep}}
DSF makes no prescription about the choice of network depth, width, or concave functions.
Other researchers commonly make simple choices such as fully-connected layers with arbitrary concave activation \citep{pmlr-v151-manupriya22a}.  Similarly, we use the monotone concave function $\phi_{\theta}(x) = \log(x +\theta)$ \todo{how avoid NaN? also, $\theta$ this $\theta$ that} with trained $\theta\in\RR_+$ and the modular function $m_{\theta}(s) = \theta_m^\top \zb_s$. 
Similar to our method, we use the network depth $N=2$ for DSF.
In our experiments, we found that not using the offset gave unacceptably poor predictive performance. We set the value of weight decay as $10^{-4}$ and learning rate as $10^{-3}$.  

\xhdr{Mixture submodular function (SubMix)~\cite{tschiatschek2014learning}}  Here, we consider $F_{\theta}(S) = \theta_1 \log(\sum_{s\in S}\theta_m ^\top \zb_s) + \theta_2 \log \log(\sum_{s\in S} \theta_m ^\top \zb_s) + \theta_3  \log \log \log(\sum_{s\in S} \theta_m ^\top \zb_s)$. Note that this design does not make sure 
$ \log(\sum_{s\in S} \theta_m ^\top \zb_s)>0$ or $ \log \log(\sum_{s\in S} \theta_m ^\top \zb_s)>0$. However, with valid initial conditions $\theta_{m,0}$ where $\log(\sum_{s\in S} \theta_{m,0} ^\top \zb_s)>0$ and
$ \log \log(\sum_{s\in S} \theta_{m,0} ^\top \zb_s)>0$ and the current learning rate $10^{-3}$, we observed that $\theta_m$ always ensured that
$\log(\sum_{s\in S} \theta_m ^\top \zb_s)>0$ and
$ \log \log(\sum_{s\in S} \theta_m ^\top \zb_s)>0$ throughout our training. 
 
Initially we started with 
$F_{\theta}(S) = \theta_1 \log(1+\sum_{s\in S}\theta_m ^\top \zb_s) + \theta_2 \log (1+\log(1+\sum_{s\in S} \theta_m ^\top \zb_s)) + \theta_3 \log(1+( \log (1+\log (1+\sum_{s\in S} \theta_m ^\top \zb_s))))$, which always would ensure that $\log(\sum_{s\in S} \theta_m ^\top \zb_s)>0$ and
$ \log \log(\sum_{s\in S} \theta_m ^\top \zb_s)>0$. But we observed that the performance deteriorates than the current candidate which does not add $1$ to each log term.

Moreover, we observed that adding additional component did not improve accuracy.
Here, we set the value of weight decay as $10^{-4}$ and learning rate as $10^{-3}$.  

In all models, we set the initial value of the  parameter vector of the modular function to be $\theta_m = \bm{1}$ which ensured that the final trained model $\theta_m \ge 0$. In all experiments, we set the batch size as $66$. For each model, we train for 400 epochs and choose the training model which shows the best  performance in last 10 epochs. We choose the best initial model based on the performance of final trained model on the validation set.

 \subsection{Computation of $\alpha$ in synthetically planted functions}
We define the curvature $F$~\cite{vondraksubmodularity} as: 
\begin{align}
    \text{curv}_F =  1-\min_{S, j\not \in S}\frac{F(j\given S)}{F(j\given \emptyset)}
\end{align}
Define  $ z_{\max} = {\max_{s\in V} ||\pmb{z}||_{\infty}}, $ $z_\text{min} =   \min_{s\in V, i \in [d]}$ and assume  $z_{\min} > e^{-2}$.  

\xhdr{Log $\times$ LogDet}
First we consider $F(S) = [\log (\sum_{\els\in S} \bm{1}^\top \feat_\els)]\cdot[\log \text{det}(\II+\sum_{\els\in S}\feat_\els \feat_\els ^\top)]$. Assume that
\begin{align}
    &f(S) = \log (\sum_{\els\in S} \bm{1}^\top \feat_\els) \nn\\
    &g(S) = \log \text{det}(\II+\sum_{\els\in S}\feat_\els \feat_\els ^\top)
\end{align}
Now, we have: 
\begin{align}
F(S\cup k) - F(S) & = f(S\cup k)\, g(S\cup k)-f(S)\, g(S)\nn\\
& =  f(S\cup k) (g(S\cup k)- g(S)) + g(S) (f(S\cup k)- f(S))\nn \\ 
& \ge f_{\min} \, \text{curv}_{g} \, g_{\min} 
\end{align}
Similarly,
\begin{align}
F(T\cup k) - F(T) & = f(T\cup k)\, g(T\cup k)-f(T)\, g(T)\nn\\
& =  f(T\cup k) (g(T\cup k)- g(T)) + g(T) (f(T\cup k)- f(T)) \label{eq:diff-f}
\end{align}
Now, $\log \det (A) \le \tr(A-\II)$. Hence, we have:
\begin{align}
    g(T\cup k)- g(T) & = \log \det\left( \II + \left( \II + \sum_{s\in T}\feat_\els \feat_\els ^\top \right)^{-1} \zb_k \zb_k ^{\top} \right) \nn\\
    & \le \zb_k ^{\top} \left( \II + \sum_{s\in T} \feat_\els \feat_\els ^\top \right)^{-1} \zb_k 
\end{align}
Moreover, $f(T \cup k) = \log (\sum_{\els\in T \cup k} \bm{1}^\top \feat_\els) \le \sum_{\els\in T \cup k} \bm{1}^\top \feat_\els $. Hence, $f(T\cup k) (g(T\cup k)- g(T))$ satisfies:
\begin{align}
    f(T\cup k) (g(T\cup k)- g(T)) \le  \zb_k ^{\top} \left( \frac{1}{\sum_{\els\in T \cup k} \bm{1}^\top \feat_\els}\left(\II + \sum_{s\in T} \feat_\els \feat_\els ^\top \right)\right)^{-1} \zb_k . 
\end{align}
Here, we make some crude probabilistic  argument.
Since $\feat_\els$ is iid uniform random variables, 
$\sum_{s\in T}  \feat_\els \feat_\els ^\top  \approx |T|\left[\II/12 + \bm{1}\bm{1} ^T/4\right] $ and 
\begin{align}
f(T\cup k) (g(T\cup k)- g(T)) \le {24d^3 z^3 _{\max}}   
\end{align} 
The second term in Eq.~\eqref{eq:diff-f} shows that:
\begin{align}
    g(T) (f(T\cup k)- f(T)) & \le  \log \text{det}\left(\II+\sum_{\els\in T}\feat_\els \feat_\els ^\top\right) \cdot \log\left(1 + \frac{\bm{1}^{\top}\zb_k}  {\sum_{s\in T} \bm{1}^{\top}\zb_s}\right) \nn\\
    & \le \tr\left( \sum_{s\in T}\feat_\els \feat_\els ^\top\right)\cdot \frac{\bm{1}^{\top}\zb_k}  {\sum_{s\in T} \bm{1}^{\top}\zb_s} \le d^2 z_{\max} ^2 / z_{\min} ^2
\end{align}
 Hence, $F(S)$ is $\alpha$-submodular with
 $$\alpha> \alpha^* =  \frac{f_{\min} \, \max\{\text{curv}_{g},\text{curv}_{f}\} \, g_{\min}}{ d^2 z_{\max} ^2 / z_{\min} ^2 + 24d^3 z^3 _{\max} }    $$
% Now, for pd matrix $A^{-1} < \sum_{i\in[d]} \frac{1}{\text{Eigen}_i (A)}\le d^2 \sum_{i\in[d]} \text{Eigen}_i (A) = d^2 \text{Trace} (A)$. Using this result,  
% \newpage

\xhdr{Log $\times$ Sqrt}Now consider $F(S) = \sum_{s\in S} \log\left(\bm{1}^\top \zb_s\right) \sqrt{\bm{1}^\top \zb_s}$. By mean value theorem:
\begin{align}
    F(S\cup k)-F(S) & =  (\bm{1}^{\top} \zb_k) \frac{d}{dx} \log x   \sqrt{x} \bigg|_{x \in (\sum_{s\in S}\bm{1}^\top \zb_s, \sum_{s\in S\cup k}\bm{1}^\top \zb_s)} \nn\\
    & = (\bm{1}^{\top} \zb_k) \frac{2+\log x}{2\sqrt{x}}
\end{align}
Similarly ${F(T\cup k)-F(T)} = (\bm{1}^{\top} \zb_k)  \max_{y} \frac{2+\log y}{2\sqrt{y}} $ where $y   \in (\sum_{s\in T}\bm{1}^\top \zb_s, \sum_{s\in T\cup k}\bm{1}^\top \zb_s)$.
\begin{align}
    \alpha \ge  \frac{\frac{2+\log x}{2\sqrt{x}}}{ \frac{2+\log y}{2\sqrt{y}}   } \ge \frac{2+\log z_{\min}}{2\sqrt{z_{\min}}}
\end{align}
The above is due to the fact that: $\max_{y} \frac{2+\log y}{2\sqrt{y}} = 1$ at $y=1$.

\section{{Additional experiments with synthetic data}}

% \begin{wraptable}[4]{r}{0.5\textwidth}
% %\vspace{-4mm}
\begin{table}[h]
\centering
\small
% \adjustbox{max width=0.5\textwidth}{ \tabcolsep 3pt
% \begingroup \footnotesize
\begin{tabular}{c||c|c|c}
\hline 
& Log    & LogDet     & FL    \\ \hline        
Decoupling & \best{0.015 $\pm$ 0.000} & \best{0.013 $\pm$ 0.000} & \best{0.022 $\pm$ 0.000} \\
End-end &{0.078 $\pm$ 0.001} & {0.073 $\pm$ 0.001} & {0.078 $\pm$ 0.001} \\
Our ($N=1$) &{0.089 $\pm$ 0.001} & {0.075 $\pm$ 0.001} & {0.089 $\pm$ 0.001}
\\ \hline 
\end{tabular}
% \endgroup}
% %\vspace{-2mm}
\caption{Variants of our approach.
% We observe that decoupling into independent integrals improves the predictive performance. 
}
% %\vspace{-1mm}
\label{tab:training}
\end{table}
\begin{table*}[h]
\centering
\small
    \begin{tabular}{c|c|c|c|c}
    \hline
        \our & \settx & \deepset & DSF & SubMix \\ \hline
        0.055 & 0.119 & 0.160 & 2.31 & 1.770 \\ \hline
    \end{tabular}
\caption{Performance measured in terms of RMSE on synthetically generated examples using $F(S) = \min(\sum_{s\in S} \pmb{1}^{\top} \pmb{z}_s,  b+\min(r , \sum_{s\in S}\pmb{1}^{\top} \pmb{z}_s),a)$
We set $r = \sum_{s\in V} \pmb{1}^{\top} \pmb{z}_s / 3$ ,  $b =  \sum_{s\in V} \pmb{1}^{\top} \pmb{z}_s / 6$, $a =   \sum_{s\in V} \pmb{1}^{\top} \pmb{z}_s / 2$. We observe that our model significantly outperforms the baselines.
}\label{tab:reb}
\end{table*}

% \end{wraptable}
\xhdr{Ablation study} We compare different variants of our approach: (i)~\our, trained by decoupling the double integral into independent integrals (Eq.~\eqref{eq:loss-mon-decoup}); (ii)~\our, trained using  end-to-end training via backpropagation  through double integrals (Section~\ref{sec:sec:train}); and 
(iii)~\our\ with $N=1$, where $N$ is the number of steps of the recursions~\eqref{eq:sub-model}. Table~\ref{tab:training} summarizes the results which reveal the following observations. (1)~We achieve substantial performance gain via decoupling into independent integrals. Although decoupling integrals is an approximation of end-to-end training, it also provides a more smooth loss surface than the loss on the double integral. (2)~Running \our{} for only a single step significantly deteriorates  performance.

\xhdr{Performance with additional planted synthetic function}
Here, we consider a featurized form of the function used in the proof of the lower bound~\cite{reb}:

$F(S) = \min(\sum_{s\in S} \pmb{1}^{\top} \pmb{z}_s,  b+\min(r , \sum_{s\in S}\pmb{1}^{\top} \pmb{z}_s),a)$.
We set $r = \sum_{s\in V} \pmb{1}^{\top} \pmb{z}_s / 3$ ,  $b =  \sum_{s\in V} \pmb{1}^{\top} \pmb{z}_s / 6$, $a =   \sum_{s\in V} \pmb{1}^{\top} \pmb{z}_s / 2$.

Table~\ref{tab:reb} summarizes the results in terms of RMSE. We observe that our method outperforms other methods by a substantial margin. 

\xhdr{Scalability analysis}
We report per-epoch training time of different methods in the context of training by (set, value) pairs.

\begin{table}[h]
    \centering
    \begin{tabular}{ c|c|c|c|c }
    \hline
        \our & \deepset & \settx & DSF & SubMix \\ \hline
        7.2649 & 1.3009 & 2.2856 & 1.39 & 1.329 \\ \hline
    \end{tabular}
    \caption{Per epoch time (in second) for different methods.}
\end{table}

While our method is slower than the baseline methods (mainly due to the numerical integration), it offers significantly higher accuracy than other methods.

\xhdr{Variation of the performance of \settx\ against the number of parameters} \settx\ is an excellent neural set function with very high expressive power. This expressive power comes from its ability to incorporate interaction between elements. However, it consumes significantly large memory in practice.  Our method and all the other baselines do not incorporate interactions and thus consume lower memory. Thus, we reduced the number of parameters of \settx\ so that it consumes similar memory as our method (8--10 GB) for a fair comparison. 
Here, we experiment with different batch size (B) and increased number (P) of parameters  (as GPU memory permitted).  Results are as follows.

\begin{table}[h]
    \centering
    \begin{tabular}{l|l|l|l|l}
    \hline
        P=29, B=66 & P=139, B=40 & P=321, B=40 & P=321, B=17 & P=439, B=17 \\ \hline
        0.063 & 0.069 & 0.056 & 0.054 & 0.055 \\ \hline
    \end{tabular}
    \caption{RMSE for different configurations of Set transformer for Facility Location dataset}
\end{table}

As expected, the performance indeed improves if we increase the number of parameters. However, even with significantly small batch size, an increased number of parameters (otherwise accuracy drops) led to large computation graphs with excessive GPU RAM consumption. This is because, in our problem, maximum set size $|V| = 10000$ and each instance in our problem is a featurized tensor of dimension $10 \times 10000$. We believe that processing batches of such instances leads to the set-attention-blocks consuming huge memory. As mentioned by~\citet[Page\,16]{lee2019set}, the SAB block in \settx\  admits a maximum size of 2000 elements, in contrast, we have 10000 elements.
% \xhdr{DSF} The current number of parameters DSF involves a fixed concave function and thus, it demands fewer parameters. We attempted to train DSFs for various model sizes by varying the number of recursive layers (we attempted {2,5,10,30,40} layers). We observe that it works best for 2 layers (this is also the case in our method where $N=2$) which gives the number of parameters 21. With an increased number of layers (max layers= 40 corresponds to number of parameters = 401), we observe that the performance does not provide any improvement. This is because, when we apply many concave functions one after another, the gradient often quickly becomes very small for a slightly large size of $S$.

% \xhdr{SubMix} We use linear combinations of  four submodular functions. Hence, the number of parameters is 4.

\section{Additional details about experiments on subset selection for product recommendation}
\label{app:real}
\begin{table}[h]
\centering 
\maxsizebox{.6\hsize}{!}{
\begin{tabular}{|l||c|c|c|c|c|c|c|} \hline
Catgories & $|\Ucal|$ & $|V|$ & $\sum |S|$ & $\EE[|S|]$ & $\min_S |S|$ &$\max_S |S|$ \\ \hline \hline
 Gear & 4277 & 100 & 16288 & 3.808 & 3 & 10 \\ 

Bath & 3195 & 100 & 12147 & 3.802 & 3 & 11 \\ 

Health & 2995 & 62 & 11053 & 3.69 & 3 & 9 \\ 

Diaper & 6108 & 100 & 25333 & 4.148 & 3 & 15 \\ 

Toys & 2421 & 62 & 9924 & 4.099 & 3 & 14 \\ 

Bedding & 4524 & 100 & 17509 & 3.87 & 3 & 12 \\ 

Feeding & 8202 & 100 & 37901 & 4.621 & 3 & 23 \\ 

Apparel & 4675 & 100 & 21176 & 4.53 & 3 & 21 \\ 

Media & 1485 & 58 & 6723 & 4.527 & 3 & 19 \\ 

% Safety & 267 & 36 & 846 & 3.169 & 3 & 5 \\ 

% Furniture & 280 & 32 & 892 & 3.186 & 3 & 6 \\ 

% Strollers & 677 & 40 & 2290 & 3.383 & 3 & 7 \\ 

% Carseats & 483 & 34 & 1576 & 3.263 & 3 & 6 \\
\hline

\end{tabular} }
\caption{Amazon baby registry statistics.}
\label{tab:stats-data}
\end{table}

\subsection{Dataset description} 

As mentioned in Section~\ref{sec:real-main}, each dataset contains a universal set $V$ and a set of subsets $\Scal=\set{S}$. We summarize the details of the categories of the Amazon baby registry~\cite{gillenwater2014expectation} in Table~\ref{tab:stats-data}. For each categories, we first filter out those subsets $S$ for which $|S| \ge 3$.  Moreover, we use the $768$ dimensional BERT embedding of the description of each item $s\in V$ to compute $\zb_s$.
\subsection{Implementation details} 
We implemented \our, DSF and SubMix following the procedure
 described in the Appendix~\ref{app:syn}, except that we considered 
 we use 
$F_{\theta}(S) = \theta_1 \log(1+\sum_{s\in S}\theta_m ^\top \zb_s) + \theta_2 \log (1+\log(1+\sum_{s\in S} \theta_m ^\top \zb_s)) + \theta_3 \log(1+( \log (1+\log (1+\sum_{s\in S} \theta_m ^\top \zb_s))))$.
 for SubMix.
  Here, we train each trainable model for 30 epochs and choose the trained model that gives best mean Jaccard coefficient on the validation set in these 30 epochs.
 For maximizing DPP, Facility Location
and Disparity Min baselines, we used the library \url{https://github.com/decile-team/submodlib}.

\xhdr{Computing environment}
Our code was written in pytorch 1.7, running on a 16-core Intel(R) Xeon(R) Gold 6226R CPU@2.90GHz with 115 GB RAM, one Nvidia V100-32 GB GPU Card and Ubuntu 20.04 OS.

\subsection{License}
We collected Amazon baby registry dataset from \url{https://code.google.com/archive/p/em-for-dpps/} which comes under BSD license.
\newpage
\section{Additional experiments on real data}
\label{app:addl}

\subsection{Replication of  Table~\ref{tab:real} with standard deviation}
\label{app:app:subset}

% \xhdr{Efficiency comparison  with~\cite{tschiatschek2018differentiable}}
% The key motivation behind our proposed permutation adversarial data subset selection method is to ensure that the estimated parameters are invariant to the  order of the elements of the input subset.
% \citet{tschiatschek2018differentiable} achieve this goal by computing Monte Carlo average of the underlying likelihood over many samples. Here, we compare our method against this Monte Carlo approach. Figure~\ref{fig:efficiency} summarizes the results for Gear and Bath categories, which shows that our method is atleast $4$x faster than the proposal of~\cite{tschiatschek2018differentiable}.

  Here, we reproduce Table~\ref{tab:real} with standard deviation. Table~\ref{tab:real-app} shows the results.
\begin{table*}[h]
\centering
\adjustbox{max width=\hsize}{ \tabcolsep 3pt
\begingroup \footnotesize
\begin{tabular}{p{1.1cm}||c|ccccc}
\hline
& \multicolumn{6}{c}{\textbf{Mean Jaccard Coefficient (MJC)}}   \\ \hline
&\our  & DSF & SubMix & FL  & DPP & DisMin   \\ \hline\hline
Gear &  \textbf{ 0.101  $\pm$  0.003 } &  \sec{ 0.099  $\pm$  0.003 } &   0.028  $\pm$  0.002 &   0.019  $\pm$  0.001 &   0.014  $\pm$  0.001 &   0.013  $\pm$  0.001 \\ 

Bath &  \textbf{ 0.091  $\pm$  0.004 } &  \sec{ 0.087  $\pm$  0.003 } &   0.038  $\pm$  0.002 &   0.02  $\pm$  0.002 &   0.012  $\pm$  0.001 &   0.01  $\pm$  0.001 \\ 

Health &  \textbf{ 0.153  $\pm$  0.005 } &  \sec{ 0.142  $\pm$  0.004 } &   0.022  $\pm$  0.002 &   0.084  $\pm$  0.004 &   0.011  $\pm$  0.001 &   0.015  $\pm$  0.001 \\ 

Diaper &  \textbf{ 0.134  $\pm$  0.004 } &  \sec{ 0.115  $\pm$  0.004 } &   0.023  $\pm$  0.001 &   0.018  $\pm$  0.001 &   0.013  $\pm$  0.001 &   0.012  $\pm$  0.001 \\ 

Toys &  \textbf{ 0.157  $\pm$  0.006 } &  \sec{ 0.15  $\pm$  0.006 } &   0.025  $\pm$  0.002 &   0.064  $\pm$  0.003 &   0.029  $\pm$  0.002 &   0.029  $\pm$  0.002 \\ 

Bedding &  \textbf{ 0.203  $\pm$  0.005 } &  \sec{ 0.191  $\pm$  0.004 } &   0.028  $\pm$  0.002 &   0.015  $\pm$  0.001 &   0.043  $\pm$  0.002 &   0.047  $\pm$  0.002 \\ 

Feeding &  \textbf{ 0.1  $\pm$  0.002 } &  \sec{ 0.091  $\pm$  0.002 } &   0.026  $\pm$  0.001 &   0.023  $\pm$  0.001 &   0.02  $\pm$  0.001 &   0.019  $\pm$  0.001 \\ 

Apparel &  \textbf{ 0.101  $\pm$  0.003 } &  \sec{ 0.093  $\pm$  0.003 } &   0.036  $\pm$  0.002 &   0.022  $\pm$  0.001 &   0.016  $\pm$  0.001 &   0.016  $\pm$  0.001 \\ 

Media &  \textbf{ 0.135  $\pm$  0.006 } &  \sec{ 0.13  $\pm$  0.006 } &   0.029  $\pm$  0.003 &   0.035  $\pm$  0.003 &   0.029  $\pm$  0.002 &   0.025  $\pm$  0.002 \\ 

\hline\hline
& \multicolumn{6}{c}{\textbf{Mean Normalized Discounted Cumulative Gain@10 (Mean NDCG@10)}}   \\ \hline
&\our  & DSF & SubMix & FL  & DPP & DisMin   \\ \hline\hline

% Gear &  \textbf{ 0.475  $\pm$  0.003 } &  \textbf{ 0.475  $\pm$  0.003 } &   0.404  $\pm$  0.002 &   0.388  $\pm$  0.001 &   0.382  $\pm$  0.001 &   0.382  $\pm$  0.001 \\ 

% Bath &  \textbf{ 0.465  $\pm$  0.004 } &  \sec{ 0.447  $\pm$  0.003 } &   0.402  $\pm$  0.002 &   0.392  $\pm$  0.002 &   0.381  $\pm$  0.001 &   0.38  $\pm$  0.001 \\ 

% Health &  \textbf{ 0.531  $\pm$  0.005 } &  \sec{ 0.487  $\pm$  0.004 } &   0.397  $\pm$  0.002 &   0.486  $\pm$  0.004 &   0.381  $\pm$  0.001 &   0.383  $\pm$  0.001 \\ 

% Diaper &  \textbf{ 0.498  $\pm$  0.004 } &  \sec{ 0.481  $\pm$  0.003 } &   0.391  $\pm$  0.001 &   0.386  $\pm$  0.001 &   0.38  $\pm$  0.001 &   0.38  $\pm$  0.001 \\ 

% Toys &  \textbf{ 0.517  $\pm$  0.006 } &  \sec{ 0.509  $\pm$  0.005 } &   0.388  $\pm$  0.002 &   0.416  $\pm$  0.003 &   0.388  $\pm$  0.002 &   0.388  $\pm$  0.002 \\ 

% Bedding &  \textbf{ 0.583  $\pm$  0.005 } &  \sec{ 0.563  $\pm$  0.004 } &   0.39  $\pm$  0.001 &   0.384  $\pm$  0.001 &   0.398  $\pm$  0.001 &   0.4  $\pm$  0.001 \\ 

% Feeding &  \textbf{ 0.464  $\pm$  0.002 } &  \textbf{ 0.464  $\pm$  0.002 } &   0.389  $\pm$  0.001 &   0.382  $\pm$  0.001 &   0.38  $\pm$  0.001 &   0.379  $\pm$  0.001 \\ 

% Apparel &  \textbf{ 0.476  $\pm$  0.003 } &  \sec{ 0.469  $\pm$  0.003 } &   0.391  $\pm$  0.001 &   0.383  $\pm$  0.001 &   0.378  $\pm$  0.001 &   0.379  $\pm$  0.001 \\ 

% Media &  \textbf{ 0.488  $\pm$  0.006 } &  \sec{ 0.487  $\pm$  0.006 } &   0.395  $\pm$  0.004 &   0.396  $\pm$  0.003 &   0.385  $\pm$  0.002 &   0.383  $\pm$  0.002 \\ 
Gear &  \textbf{ 0.539  $\pm$  0.004 } &  \sec{ 0.538  $\pm$  0.004 } &   0.449  $\pm$  0.003 &   0.433  $\pm$  0.002 &   0.425  $\pm$  0.002 &   0.426  $\pm$  0.002 \\ 

Bath &  \textbf{ 0.52  $\pm$  0.004 } &  \sec{ 0.5  $\pm$  0.004 } &   0.447  $\pm$  0.002 &   0.433  $\pm$  0.002 &   0.427  $\pm$  0.002 &   0.422  $\pm$  0.002 \\ 

Health &  \textbf{ 0.597  $\pm$  0.005 } &  \sec{ 0.549  $\pm$  0.004 } &   0.449  $\pm$  0.003 &   0.54  $\pm$  0.005 &   0.425  $\pm$  0.002 &   0.435  $\pm$  0.002 \\ 

Diaper &  \textbf{ 0.562  $\pm$  0.004 } &  \sec{ 0.546  $\pm$  0.004 } &   0.447  $\pm$  0.002 &   0.44  $\pm$  0.002 &   0.435  $\pm$  0.002 &   0.435  $\pm$  0.002 \\ 

Toys &  \textbf{ 0.591  $\pm$  0.006 } &  \sec{ 0.577  $\pm$  0.006 } &   0.446  $\pm$  0.003 &   0.472  $\pm$  0.003 &   0.448  $\pm$  0.003 &   0.449  $\pm$  0.003 \\ 

Bedding &  \textbf{ 0.643  $\pm$  0.005 } &  \sec{ 0.623  $\pm$  0.004 } &   0.437  $\pm$  0.002 &   0.438  $\pm$  0.002 &   0.456  $\pm$  0.002 &   0.461  $\pm$  0.002 \\ 

Feeding &  \textbf{ 0.55  $\pm$  0.003 } &  \sec{ 0.547  $\pm$  0.003 } &   0.459  $\pm$  0.002 &   0.453  $\pm$  0.001 &   0.454  $\pm$  0.001 &   0.452  $\pm$  0.001 \\ 

Apparel &  \textbf{ 0.558  $\pm$  0.004 } &  \sec{ 0.55  $\pm$  0.004 } &   0.459  $\pm$  0.002 &   0.452  $\pm$  0.002 &   0.446  $\pm$  0.002 &   0.444  $\pm$  0.002 \\ 

Media &  \textbf{ 0.578  $\pm$  0.007 } &  \sec{ 0.578  $\pm$  0.006 } &   0.474  $\pm$  0.004 &   0.47  $\pm$  0.004 &   0.461  $\pm$  0.004 &   0.461  $\pm$  0.004 \\ 

\hline\hline
\end{tabular}
\endgroup}
\caption{Replica of Table~\ref{tab:real} with standard deviation. 
Here, we perform prediction of subsets in product recommendation task. Performance is measured in terms of Jaccard Coefficient (JC) and Normalized Discounted Cumulative Gain@10 (NDCG@10)
  for nine datasets the Amazon baby registry records, for \our, Deep submodular function (DSF), mixture of submodular functions (SubMix),
Facility location (FL), Determinantal point process (DPP) and Disparity Min (DisMin). In all experiments, we use training, test, validation folds of equal size.
Numbers in \textbf{bold}   (\sec{underline})
indicate best (second best) performer. }

 \label{tab:real-app}
\end{table*}

\subsection{Efficiency }
The key motivation of our proposed data subset selection method is to ensure that the estimated parameters are invariant to the  order of the elements of the input subset.
\citet{tschiatschek2018differentiable} achieve this goal by computing Monte Carlo average of the underlying likelihood over many samples.   In Figure~\ref{fig:efficiency}, we compare our method with their proposal,  which shows that our method is ${\ge}4\times$ faster.
 \begin{figure}[h]
\centering
%\vspace{-5mm}
\subfloat[Gear]{\includegraphics[width=0.3\hsize]{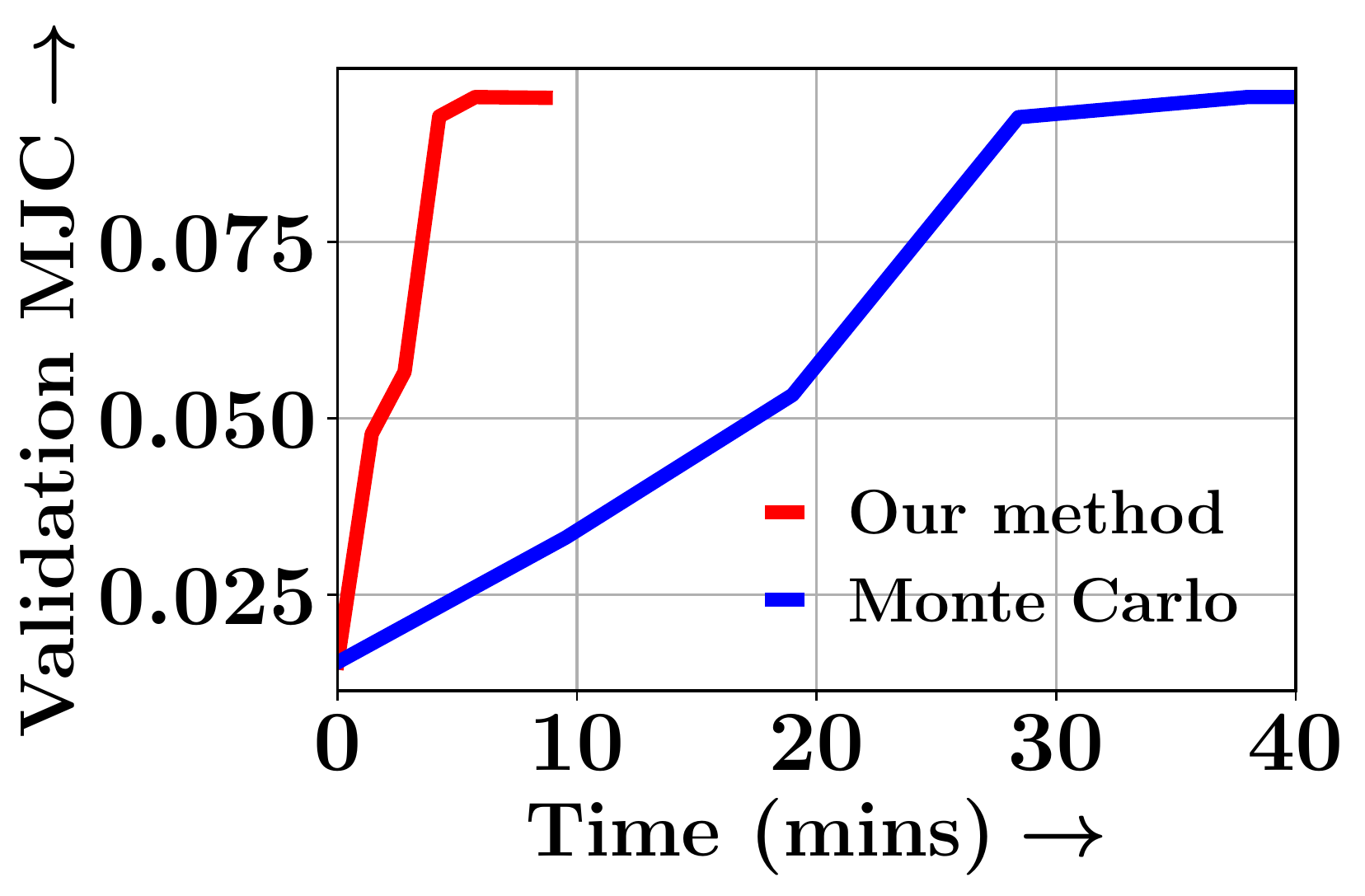}}\hspace{4mm}
\subfloat[Bath]{\includegraphics[width=0.3\hsize]{FIG/efficiencybathjc.pdf}}
% \hspace{0.5cm}
% \subfloat[Bath, MJC]{ \includegraphics[width=0.20\hsize]{FIG/efficiencybathjc.pdf}}\hspace{2mm}
% \subfloat[Bath, MAP]{ \includegraphics[width=0.20\hsize]{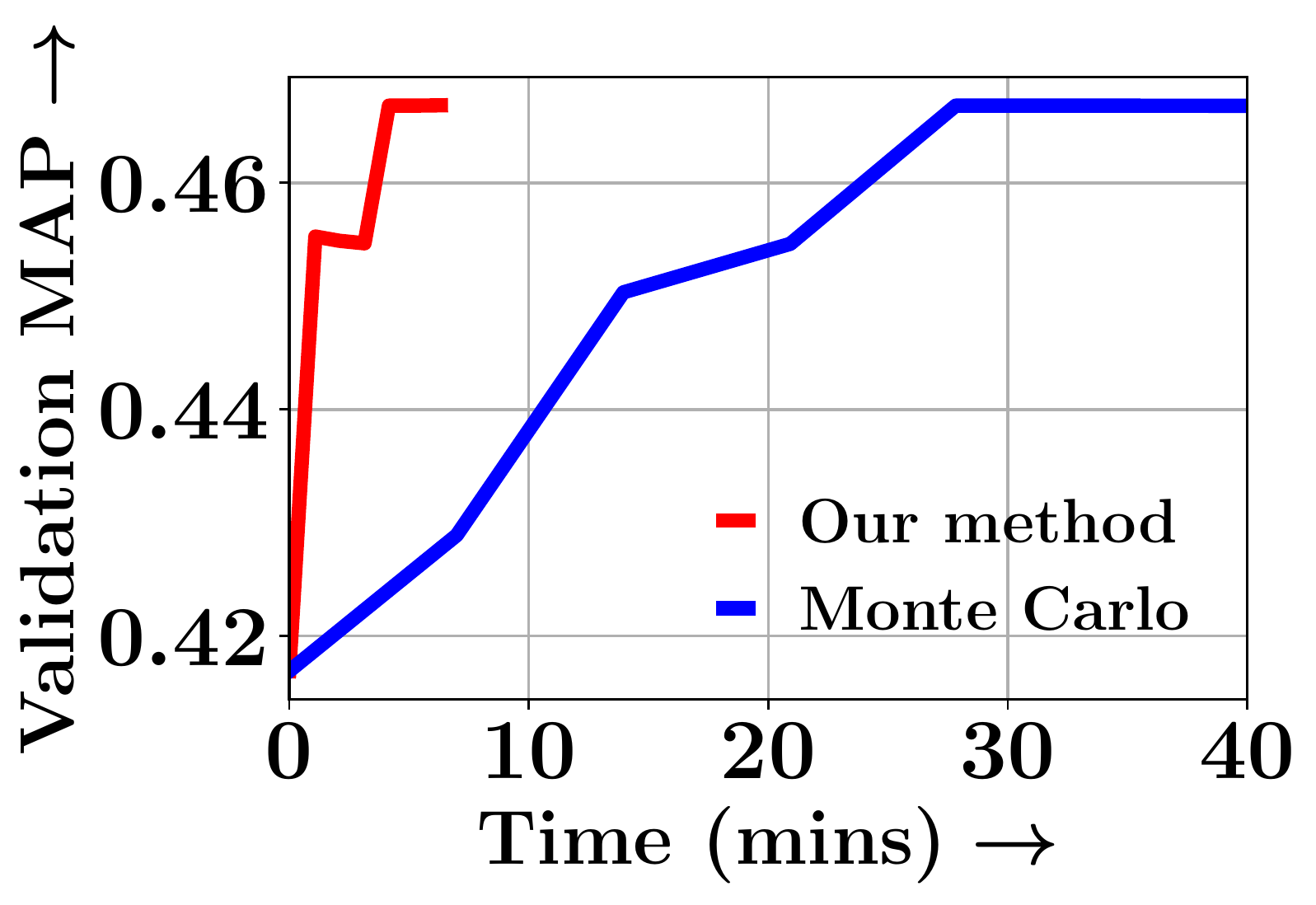}}\hspace{2mm}
% %\vspace{-2mm}
\caption{Variation of MJC as training progresses, for Gear and Bath categories.  Our proposed permutation adversarial subset selection is atleast $4\times$ faster than the Monte Carlo sampling method~\cite{tschiatschek2018differentiable}.}
% %\vspace{-1mm}
\label{fig:efficiency}
\end{figure}
\newpage

% \newpage
%\bibliography{refs}
%\bibliographystyle{plainnat}
% \bibliography{refs}
% \bibliographystyle{plainnat}
% \newpage
% %\bibliography{refs}
% %\bibliographystyle{plainnat}
% \bibliography{refs}
% \bibliographystyle{plainnat}
\end{document}